\documentclass[12pt]{article}

\usepackage[top=1in, bottom=1in, left=1.1in, right=1.1in]{geometry}
\usepackage{setspace}
\usepackage{amsmath,amssymb,amsfonts,amsbsy}
\usepackage{amsthm}
\usepackage[utf8]{inputenc}
\usepackage[T1]{fontenc}
\usepackage{textcomp}
\usepackage[disable]{microtype}
\usepackage{nicefrac}

\usepackage{graphicx}
\usepackage{epstopdf}
\usepackage{float}
\usepackage{afterpage}

\usepackage{booktabs}
\usepackage{dcolumn,array}
\usepackage{multirow}

\usepackage{xcolor}
\usepackage{hyperref}
\hypersetup{
  breaklinks=true,
  colorlinks=true,
  urlcolor=blue,
  linkcolor=blue,
  citecolor=blue,
}

\usepackage{authblk}

\usepackage{url}
\usepackage{algorithmic}
\usepackage{tocloft}
\usepackage{tcolorbox}
\tcbuselibrary{skins}
\usepackage{fancyhdr}

\def\figureautorefname{Fig.}
\def\sectionautorefname{Sec.}
\def\equationautorefname{Eq.}

\def\tableautorefname{Tab.}

\newenvironment{lemma}[2][Lemma]{\begin{trivlist}
\item[\hskip \labelsep {\bfseries #1}\hskip \labelsep {\bfseries #2.}]}{\end{trivlist}}

\newenvironment{proposition}[2][Proposition]{\begin{trivlist}
\item[\hskip \labelsep {\bfseries #1}\hskip \labelsep {\bfseries #2.}]}{\end{trivlist}}

\makeatletter
\renewcommand{\maketitle}{%
  \begin{center}
    {\fontsize{15}{19}\bfseries\selectfont
      AUCp: Pseudo-AUC for Inference Model Selection\\[6pt]
      with Unlabeled Validation Data in Abnormality Detection\par}
    \vspace{12pt}
    {\normalsize
      \mbox{Md~Mahfuzur~Rahman~Siddiquee}$^{1,*}$,\;
      \mbox{Fazle~Rafsani}$^{1}$,\;
      \mbox{Jay~Shah}$^{1}$,\;
      \mbox{Teresa~Wu}$^{1}$,\;
      \mbox{Catherine~D~Chong}$^{2}$,\;
      \mbox{Todd~J~Schwedt}$^{2}$,\;
      \mbox{Baoxin~Li}$^{1}$\par}
    \vspace{8pt}
    {\small\itshape
      $^{1}$School of Computing and Augmented Intelligence,
      Arizona State University, Tempe, AZ~85281\\[2pt]
      \normalfont\small\texttt{\{mrahmans, frafsani, jgshah1, teresa.wu, baoxin.li\}@asu.edu}\\[5pt]
      \itshape$^{2}$Mayo Clinic, Arizona\quad
      \normalfont\small\texttt{\{chong.catherine, schwedt.todd\}@mayo.edu}\\[7pt]
      \normalfont\footnotesize
      $^{*}$Corresponding author (\texttt{mrahmans@asu.edu}).
      Manuscript received April~3, 2025; accepted April~10, 2026.
      Supported by DoD W81XWH-15-1-0286 \& W81XWH1910534,
      NIH K23NS070891, NIH-NINDS 1R61NS113315-01, and Amgen ISS~20187183.\par}
  \end{center}
  \vspace{4pt}
}
\makeatother

\date{}

\begin{document}
\maketitle
\thispagestyle{fancy}

\begin{tcolorbox}[
  enhanced,
  colback=blue!4!white,
  colframe=blue!40!black,
  boxrule=0.6pt,
  arc=3pt,
  left=6pt, right=6pt, top=5pt, bottom=5pt,
  before skip=2pt, after skip=10pt
]
\small\textbf{Published in:}~%
Md Mahfuzur Rahman Siddiquee, F. Rafsani, J. Shah, T. Wu, C.\,D. Chong,
T.\,J. Schwedt, and B. Li,
``AUCp: Pseudo-AUC for Inference Model Selection with Unlabeled Validation
Data in Abnormality Detection,''
\textit{IEEE Transactions on Medical Imaging}, pp.~1--1, 2026.
doi:~\href{https://doi.org/10.1109/TMI.2026.3684946}{\texttt{10.1109/TMI.2026.3684946}}.
\end{tcolorbox}

\begin{abstract}
  Abnormality detection is a crucial yet challenging task in medical image analysis. Distinguishing abnormalities from normal data by learning to reconstruct normal-only data alleviates the reliance on labeled datasets. However, many studies, even if unsupervised, rely on a labeled validation set to select the best model for inference from multiple training iterations. For many diseases labeled data are unavailable and substantially time consuming to obtain. To address this,  $\boldsymbol{AUC}_p$ - a novel metric that supports abnormality detection for unsupervised and self-supervised methods is proposed. Instead of evaluating the realism of reconstructed images to select the best of model for inference, it focuses on actual detection performance and without requiring an annotated test set. Assuming the pseudo ground truth of all unannotated samples in the test set as abnormal/positive and using traditional ${AUC}$ calculation,  ${AUC}_p$ scores are derived. Given a large and representative training set of normal samples, we show mathematical and empirical evidence that model selection using ${AUC}_p$ scores improves disease detection in terms of unsupervised and self-supervised methods over conventional metrics. Using two unsupervised methods for neurologic disease detection and self-supervised methods on diverse datasets, our results demonstrate that the ${AUC}_p$ score effectively identifies the optimal model for inference, significantly enhancing abnormality and disease detection. The corresponding implementations are available in \url{https://github.com/mahfuzmohammad/AUCp}.
\end{abstract}

\noindent\textbf{Keywords:} Abnormality detection; model selection; validation metric;
unsupervised anomaly detection; self-supervised anomaly detection.

\bigskip
\hrule
\bigskip

\section{Introduction}
Detecting anomalies is challenging, as the lack of labeled data makes training models difficult. Selecting a well-trained and generalized model for inference is also difficult because of the absence of labeled normal and abnormal samples in the validation dataset. As a result, many studies assume that an annotated validation dataset is available ~\cite{xiang2024exploiting,zingman2024learning,frotscher2023unsupervised,georgescu2023masked,sato2023anatomy,xiang2023squid,bengs2022unsupervised,kim2021unsupervised,shvetsova2021anomaly,che2025anofpdm}, while some select models using synthetic abnormalities ~\cite{fung2023model,marimont2023harder}. Although these evaluation methods work only to compare the methods for research purposes, they keep the abnormality detection methods proposed unusable for real-life deployment. Many studies also select the inference model based on the best training loss. In ~\cite{cai2024medianomaly} the authors studied such abnormality detection methods and compared the performance of autoencoder models that trained and selected inference models using $l_1$, $l_2$, $SSIM$ (structure similarity index) and perceptual loss. They found that no metric consistently outperforms the others in multiple datasets. This is expected since these metrics aim to achieve the best reconstruction of the input image, not the best abnormality detection. 

In \cite{siddiquee2023brainomaly} and \cite{rahman2022healthygan}, the authors introduced methods to use unlabeled datasets that included normal images and abnormal images during training to improve the detection of unsupervised disease/abnormality at the patient level. In the context of abnormality detection, self-supervised learning methodologies generally involve defining a pretext task that facilitates the model's learning of significant features from the data, independently of any annotated instances similar to the unsupervised setting. In this methodology, the model acquires the ability to produce pseudo-labels by utilizing the intrinsic structures present within the data. Though these methods produce promising results, one of the major issues was unaddressed in these methods: selecting a well-trained generalized model for inference. Since the training objective in both of these methods is to generate the most realistic images possible of healthy anatomy, the model was selected based on the realism of the images generated using the Fréchet inception distance (FID)~\cite{heusel2017gans}. It is one of the most popular metrics in the generative adversarial network (GAN)  literature to measure the realism of the generated image. However, FID does not select the inference model based on its ability to detect abnormality; it is observed that the FID metric does not have a strong correlation with the underlying detection performance of the model based on its capability of abnormality detection (see~\sectionautorefname~\ref{subsec_aucp:results_model_selection}). 

Since labeled abnormal instances are not available, one cannot simply follow the validation steps of traditional supervised learning to select parameters for the model that maximize the objective function of abnormality detection. As a solution, a new metric, pseudo-AUC ($\boldsymbol{AUC}_p$) is proposed to select a suitable model for inference when the labeled validation dataset is unavailable. Given a training dataset $\boldsymbol{D}_{train}$ and a testing/evaluation dataset $\boldsymbol{D}_{test}$ (\figureautorefname~\ref{fig_aucp:dataset}), $\boldsymbol{AUC}_p$ \textit{assumes} the label for each unlabeled instance in $\boldsymbol{D}_{test}$ is ``abnormal'' while the labels for normal instances in $\boldsymbol{D}_{train}$ are known. Using these {\em imperfect} labels as pseudo ground truths in the traditional $\boldsymbol{AUC}$ calculation results in $\boldsymbol{AUC}_p$ scores. ~\sectionautorefname~\ref{sec_aucp:method} provides the details of $\boldsymbol{AUC}_p$ calculation and mathematically show that with a large amount of representative normal instances in $\boldsymbol{D}_{train}$, the $\boldsymbol{AUC}_p$ scores are very close to the actual $\boldsymbol{AUC}$ scores.

The idea of using a pseudo-AUC ($\boldsymbol{AUC}_p$) for inference-time model selection was first introduced in Brainomaly~\cite{siddiquee2023brainomaly}, where it was employed as a practical heuristic to select a checkpoint within a specific GAN-based unsupervised disease-detection pipeline. However, that work neither provided a formal definition or theoretical analysis of $\boldsymbol{AUC}_p$, nor examined its behavior beyond the Brainomaly setting or in comparison to alternative selection criteria. In contrast, the present study formalizes the $\boldsymbol{AUC}_p$ metric and establishes its theoretical relationship to the true AUC under realistic assumptions on the training data. We further provide a comprehensive empirical evaluation of $\boldsymbol{AUC}_p$ across multiple unsupervised and self-supervised abnormality-detection paradigms and diverse medical imaging datasets. In addition, we explicitly compare $\boldsymbol{AUC}_p$-based inference-time model selection with commonly used heuristics such as reconstruction-error-based selection, demonstrating that $\boldsymbol{AUC}_p$ yields more stable and reliable performance across methods and datasets.

In this study,  HealthyGAN\cite{rahman2022healthygan} and Brainomaly\cite{siddiquee2023brainomaly} are chosen as baseline methods to empirically show improvement in baseline disease detection performance using the proposed metric. Since Brainomaly, which is an extension of HealthyGAN, outperforms existing state-of-the-art unsupervised disease detection methods on a public Alzheimer's disease (AD) dataset and an institutional dataset for headache detection,  both of them are used as reference methods to compare performance with the proposed metric. Moreover, experiments on seven distinct publicly accessible datasets are conducted using four self-supervised abnormality detection approaches to evaluate the effectiveness of the proposed metric for inference model selection. In summary, this research makes five contributions: .
\begin{itemize}
    \item A new metric, $\boldsymbol{AUC}_p$ is proposed for selecting a suitable model for inference when a labeled validation dataset is unavailable.
    \item Mathematical proof is provided to show the proposed $\boldsymbol{AUC}_p$ metric provides scores close to the actual $\boldsymbol{AUC}$ scores when there is a large number of representative normal instances available in the training set by proving the pseudo-ground truths used in the $\boldsymbol{AUC}_p$ calculation are equivalent to the actual ground truth used in the calculation $\boldsymbol{AUC}$.
    \item Empirical experiments are conducted to demonstrate that $\boldsymbol{AUC}_p$ better correlates with the models' underlying disease detection performances and selects a higher-performing model than a model chosen by FID, commonly used in GAN model development.
    \item Empirical experiments are conducted to demonstrate that  $\boldsymbol{AUC}_p$ further improves baseline unsupervised (HealthyGAN \cite{rahman2022healthygan} and Brainomaly \cite{siddiquee2023brainomaly}) and self-supervised (CutPaste \cite{li2021cutpaste}, Foreign Patch Interpolation (FPI) \cite{tan2020detecting},  Poisson Image Interpolation (PII) \cite{tan2021detecting}, and Natural Synthetic Anomalies (NSA) \cite{schluter2022natural}) abnormality detection.
    \item The Brainomaly inference model selected by $\boldsymbol{AUC}_p$ is evaluated in both transductive and inductive settings to match real-world scenarios and found improved disease detection performance compared to the model previously selected by FID.
\end{itemize}

In the following sections, the mathematical formulation of the $\boldsymbol{AUC}_p$ metric is discussed in Sec. \ref{sec_aucp:method} followed by the experimental setup for both unsupervised and supervised abnormality detection methods, along with the datasets used in the experiments in Sec. \ref{sec_aucp:experiments}. Finally, the results obtained in different settings are presented and analyzed.
\section{The Proposed Metric: Pseudo-AUC ($\boldsymbol{AUC}_p$)}
\label{sec_aucp:method}

\begin{figure}[!htp]
    \centering
    \includegraphics[width=0.8\linewidth]{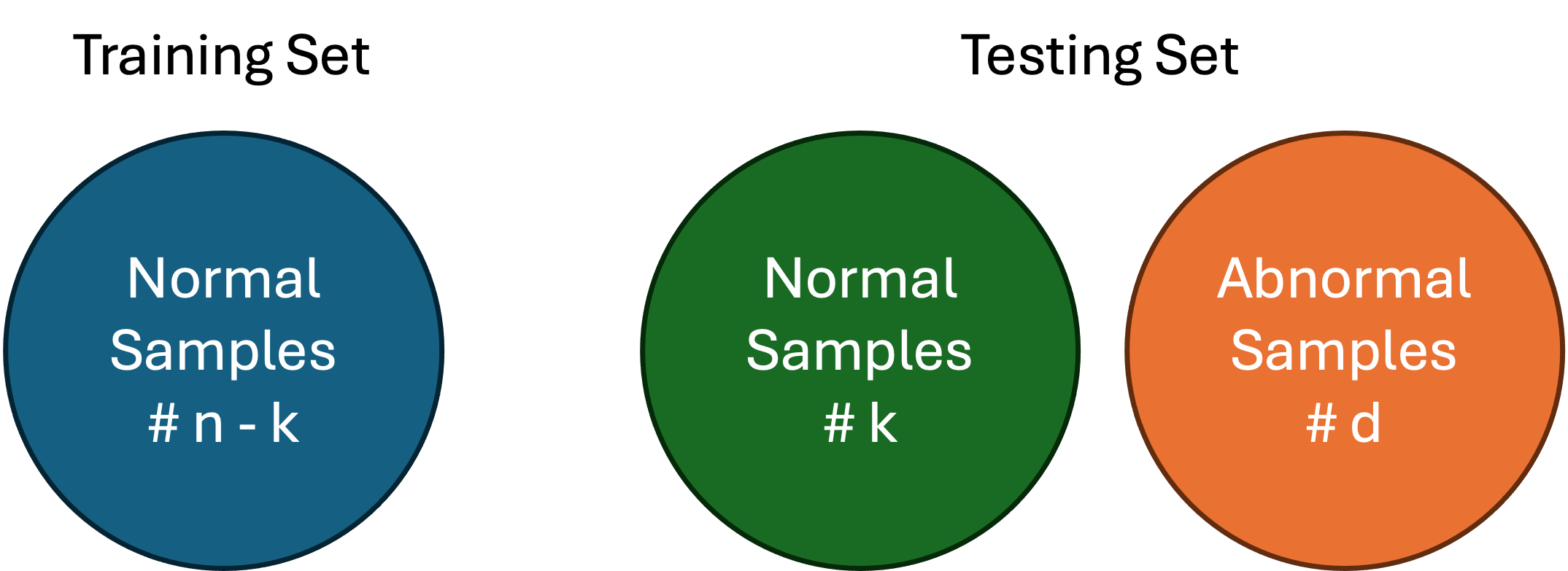}
    \caption{A visualization of the general dataset settings in abnormality detection problems. Usually, a set $\boldsymbol{D}_{train}$ of $n-k$ instances of normal anatomies is provided for training. The trained model is then applied to a testing $\boldsymbol{D}_{test}$ set of $k+d$ instances to detect abnormalities where $k$ is the number of instances from normal anatomies, and $d$ is the number of instances from abnormal anatomies. Please note that the labels of the instances in the test set are unknown.}
    \label{fig_aucp:dataset}
\end{figure}

\begin{table}[h]
\centering
\caption{Notation Summary}
\begin{tabular}{ll}
\toprule
$D_{\text{train}}$ & Training dataset containing $n - k$ normal instances \\
$D_{\text{test}}$  & Unlabeled test dataset with $k$ normal and $d$ abnormal instances \\
$n$                & Total number of normal instances across both datasets \\
$d$                & Number of abnormal instances in $D_{\text{test}}$ \\
GT                 & True labels (unknown in practice) \\
$\text{GT}_p$      & Pseudo-labels used for AUCp \\
$P$                & Model-generated anomaly scores \\
AUC                & Conventional AUC with true labels \\
$\text{AUC}_p$     & Pseudo-AUC computed with $\text{GT}_p$ \\
$M(\cdot;\theta)$  & Abnormality detection method with parameters $\theta$ \\
$f(\cdot)$         & Feature extractor \\
\bottomrule
\end{tabular}
\end{table}

Let's assume a general medical abnormality detection scenario where the training set $\boldsymbol{D}_{train}$ contains $n - k$ instances of normal anatomies for training an abnormality detection method. The set $\boldsymbol{D}_{train}$ should be comprehensive to learn the variation in the normal anatomies well. The test set $\boldsymbol{D}_{test}$ contains $k+d$ samples on which one will apply the abnormality detection method once trained. Here, $k$ is the number of instances of normal anatomies, and $d$ is the number of instances of abnormal anatomies. The total number of normal instances from both the training and testing datasets is $n$. Let assume the test set contains at least $1$ such instance of abnormal anatomies ($d >= 1$). Please note that the actual label and the distribution of the instances in the test set, being from normal and abnormal anatomies, are unknown. The objective of the abnormality detection method, such as HealthyGAN \cite{rahman2022healthygan} and Brainomaly \cite{siddiquee2023brainomaly}, is to train a model that can separate the instances of abnormal anatomies from the normal ones in the test set $\boldsymbol{D}_{test}$. A schema of the dataset is provided in~\figureautorefname~\ref{fig_aucp:dataset}. 
To calculate the $\boldsymbol{AUC}_p$ metric, pseudo-labels ($\boldsymbol{GT}_p$) are first introduced. The negative instances in the $\boldsymbol{GT}_p$ come from the given set of instances from normal/healthy anatomies, $\boldsymbol{D}_{train}$. Since no annotated positive instance (instances of diseased/abnormal anatomies) is available, we utilize the assumption in the \textit{problem settings} that the test set $\boldsymbol{D}_{test}$ contains at least 1 abnormal instances, and we assume all the instances in the $\boldsymbol{D}_{test}$ are positive/abnormal. With this, the test instances with a partially correct positive label assignment are considered as the positive instances in the $\boldsymbol{GT}_p$. Given $\boldsymbol{GT}_p$, the $\boldsymbol{AUC}_p$ for a model's prediction $\boldsymbol{P}$ is defined in~\equationautorefname~\ref{eq:aucp_def}.

\begin{equation}
\label{eq:aucp_def}
    \boldsymbol{AUC}_p(\boldsymbol{GT}_p, \boldsymbol{P}) = \boldsymbol{AUC}(\boldsymbol{GT}_p, \boldsymbol{P})
\end{equation}

\noindent Here, the $\boldsymbol{AUC}$ means the area under the receiver operating characteristics curve when the actual label ($\boldsymbol{GT}$) of the instances is known.

\begin{proposition}{1}
\label{prop_aucp:gt_vs_gtp}
    For a large set of instances from healthy/normal anatomies, $\boldsymbol{D}_{train}$, $\boldsymbol{GT}_p$ is almost equivalent to $\boldsymbol{GT}$. Here, $\boldsymbol{GT}$ would be the actual labels with $n$ negative/normal and $d$ positive/abnormal samples if labels of the test set $\boldsymbol{D}_{test}$ were available during inference model selection. In other word, when $n \to \infty$, $\boldsymbol{AUC}(\boldsymbol{GT}, \boldsymbol{GT}_p) \to 1$.
\end{proposition}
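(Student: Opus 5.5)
We treat $\boldsymbol{GT}$ and $\boldsymbol{GT}_p$ as two label vectors over the same $n+d$ instances (the $n-k$ training normals together with the $k+d$ test instances). We then compute $\boldsymbol{AUC}(\boldsymbol{GT}, \boldsymbol{GT}_p)$ directly, using $\boldsymbol{GT}_p$ as a binary score and $\boldsymbol{GT}$ as the reference labels. The plan has three steps. First, tabulate the joint counts as a confusion matrix. Second, use the fact that the ROC curve of a binary score has a single interior vertex, so its AUC is $\tfrac12(\mathrm{TPR}+\mathrm{TNR})$. Equivalently, use the Mann--Whitney form $\boldsymbol{AUC}=\Pr(s^+>s^-)+\tfrac12\Pr(s^+=s^-)$ over positive/negative pairs. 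Third, let $n\to\infty$.

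\emph{Step 1 (counts).} Under $\boldsymbol{GT}$ there are $d$ positives and $n$ negatives. Every true positive lies in $\boldsymbol{D}_{test}$, so it receives pseudo-label $1$; hence $\mathrm{TP}=d$, $\mathrm{FN}=0$, and $\mathrm{TPR}=1$. Among the $n$ true negatives, the $n-k$ training normals receive pseudo-label $0$ (true negatives). The $k$ test normals receive pseudo-label $1$ (false positives). Thus $\mathrm{TNR}=(n-k)/n$.

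\emph{Step 2 (closed form).} Using the pairwise formula, a (positive, negative) pair with $s^+=1$, $s^-=0$ counts $1$, and one with $s^+=s^-=1$ counts $\tfrac12$. This gives
\begin{equation*}
\boldsymbol{AUC}(\boldsymbol{GT},\boldsymbol{GT}_p)=\frac{d(n-k)+\tfrac12 dk}{dn}=1-\frac{k}{2n}.
\end{equation*}
The result agrees with $\tfrac12(\mathrm{TPR}+\mathrm{TNR})$, which is a useful consistency check. Note that it is independent of $d$ and needs only $d\ge1$, so that the positive class is nonempty; this is exactly where the paper's problem-setting assumption enters.

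\emph{Step 3 (limit).} This is the main obstacle: what ``$n\to\infty$'' means must be made precise. If $k$ were allowed to grow proportionally with $n$, the limit would be $1-\tfrac12\lim k/n$, not $1$. So I will read the statement's phrase ``a large set of instances \dots\ $\boldsymbol{D}_{train}$'' as the growth of the training set $n-k$ with the test set held fixed, or more generally with $k=o(n)$. Under that reading $k/(2n)\to0$, and the claim follows, with an explicit rate. I would also remark that the fraction of mislabeled instances in $\boldsymbol{GT}_p$ is $k/(n+d)\to0$, which justifies the informal phrase ``almost equivalent''. I would add a caution: without this growth assumption on $k$ the proposition fails, so the assumption should be stated explicitly.
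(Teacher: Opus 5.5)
Your proposal is correct and follows essentially the paper's route: both compute $\boldsymbol{AUC}(\boldsymbol{GT},\boldsymbol{GT}_p)$ in closed form by treating $\boldsymbol{GT}_p$ as a binary score. The paper does this with a two-trapezoid sum over thresholds $0$, $0.5$, $1$; you use the pairwise formula, equivalently $\tfrac12(\mathrm{TPR}+\mathrm{TNR})$. Both then let $n\to\infty$ with $k$ fixed, which is exactly the reading you make explicit. Your closed form $1-\tfrac{k}{2n}$ is the correct one. The paper obtains $1-\tfrac{k}{n}$ because its threshold-$1$ ROC point coincides with the threshold-$0.5$ point $(k/n,1)$, so the curve never reaches $(0,0)$ and the triangle of area $\tfrac{k}{2n}$ is dropped; this slip does not affect the limit.
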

\begin{figure}[!h]
    \centering
    
    \includegraphics[scale=0.5]{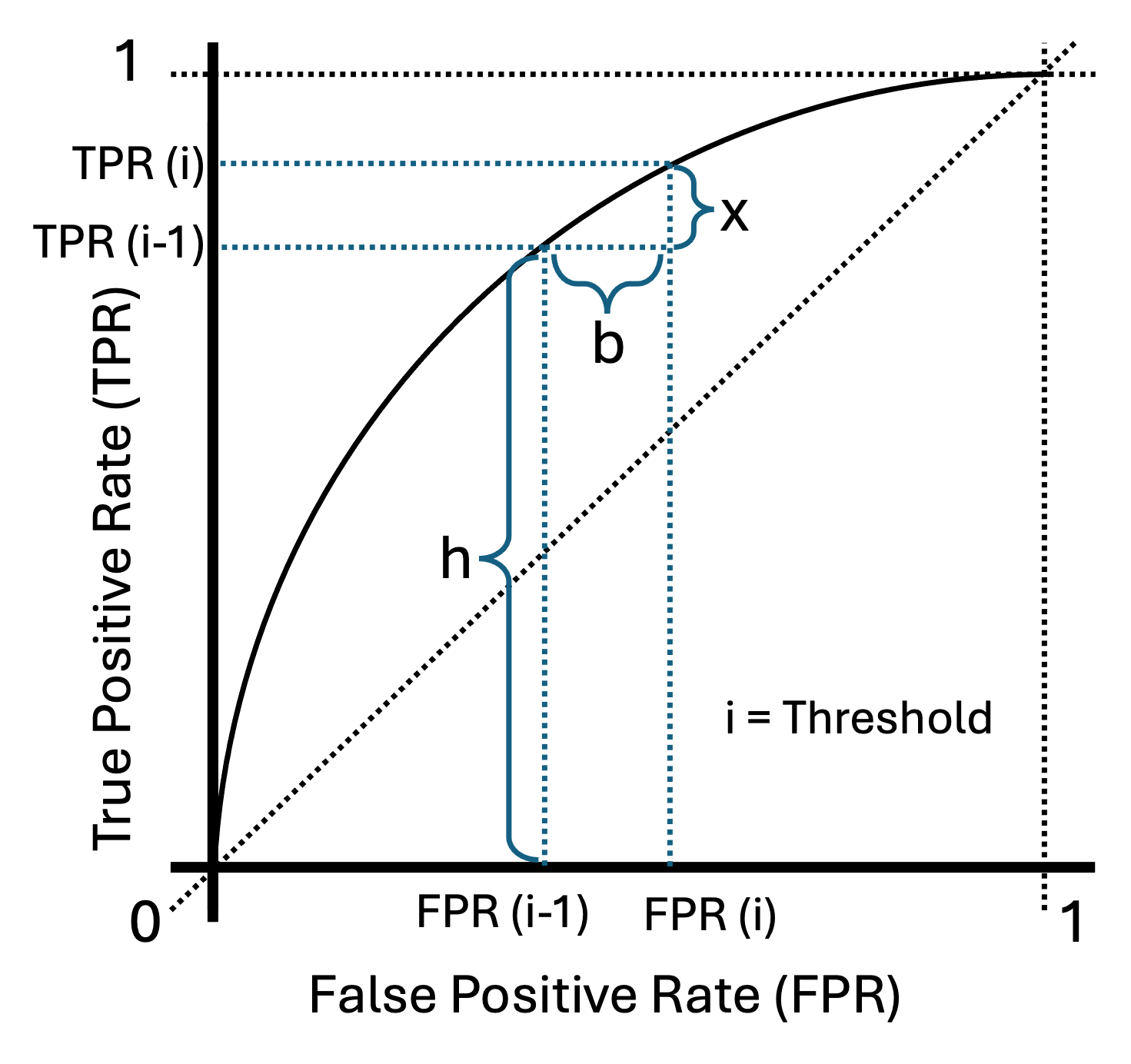}
    \caption{An illustration of ROC curve.}
    \label{fig:roc}
\end{figure}

    







\begin{proof}
Using the trapezoidal rule on the ROC curve in Fig.~\ref{fig:roc}, the area under the curve (AUC) can be written as
\begin{equation}
\operatorname{AUC}
= \frac{1}{2}\sum_{i}\bigl(\mathrm{FPR}_{i}-\mathrm{FPR}_{i-1}\bigr)\bigl(\mathrm{TPR}_{i}+\mathrm{TPR}_{i-1}\bigr),
\label{eq:auc_trap}
\end{equation}
where $\mathrm{FPR}_i=\mathrm{FP}_i/n$ and $\mathrm{TPR}_i=\mathrm{TP}_i/d$, with $n$ and $d$ denoting the numbers of negative and positive instances, respectively.

Computing $\operatorname{AUC}(\mathrm{GT},\mathrm{GT}_{p})$ yields two trapezoids (thresholds $0\!\to\!0.5$ and $0.5\!\to\!1$):
\begin{align}
\operatorname{AUC}(\mathrm{GT},\mathrm{GT}_{p})
&= \tfrac{1}{2}\bigl(\mathrm{FPR}_{0}-\mathrm{FPR}_{0.5}\bigr)\bigl(\mathrm{TPR}_{0}+\mathrm{TPR}_{0.5}\bigr) \notag\\
&\quad + \tfrac{1}{2}\bigl(\mathrm{FPR}_{0.5}-\mathrm{FPR}_{1}\bigr)\bigl(\mathrm{TPR}_{0.5}+\mathrm{TPR}_{1}\bigr).
\label{eq:two_traps}
\end{align}
Let $k$ be the number of negatives assigned positive in $\mathrm{GT}_{p}$. Then
\begin{equation}
\mathrm{FPR}_{0.5}-\mathrm{FPR}_{1}=\frac{k}{n}-\frac{k}{n}=0,
\end{equation}
so the second term in \eqref{eq:two_traps} vanishes and
\begin{align}
\operatorname{AUC}(\mathrm{GT},\mathrm{GT}_{p})
&= \tfrac{1}{2}\!\left(\frac{\mathrm{FP}_{0}-\mathrm{FP}_{0.5}}{n}\right)\!\left(\frac{\mathrm{TP}_{0}+\mathrm{TP}_{0.5}}{d}\right) \notag\\
&= \tfrac{1}{2}\!\left(\frac{n-k}{n}\right)\!\left(\frac{d+d}{d}\right)
= 1-\frac{k}{n}.
\end{align}
If $k=0$, then $\mathrm{GT}=\mathrm{GT}_{p}$. More generally, as $n\to\infty$ with fixed $k$, $\operatorname{AUC}(\mathrm{GT},\mathrm{GT}_{p})\to 1$.
\end{proof}

Consider a special scenario in which a model predicts the normal instances in the training set are normal, but the normal instances in the test set are abnormal. Mathematically, our $\boldsymbol{AUC}_p$ should provide a perfect score of 1. Such a model seems to get picked by our $\boldsymbol{AUC}_p$ over a model correctly identifying normal instances in both the training and the test set. However, if the training set contains large enough instances representative of features in negative instances, then Lemma 1 shows that the scenario when a model correctly identifies the negative instances in the training set but mislabels the negative instances in the test as positive cannot occur.

\begin{lemma}{1}
    If $\boldsymbol{D}_{train}$ is large and represents the universal feature set of negative instances (assuming $\boldsymbol{D}_{train}$ as universal negatives), then $M(f(\boldsymbol{N}_T) \cap f(\boldsymbol{D}_{train}); \theta) = M(f(\boldsymbol{N}_T); \theta)$ where $M$ is the medical abnormality detection method with the learned weight $\theta$, $\boldsymbol{N}_T$ is the set of normal instances ($k$ instances and $k > 0$) in the test set $\boldsymbol{D}_{test}$, and $f$ is a feature extractor from the input instances.
\end{lemma}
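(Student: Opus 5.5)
The argument is essentially set-theoretic: the ``universal negatives'' hypothesis is a containment of feature sets, and the claimed equality follows by applying $M$ to equal sets. First I would make the hypothesis precise. Saying that $\boldsymbol{D}_{train}$ is large and represents the universal feature set of negative instances means that every feature vector a normal anatomy can produce is already realized, up to the resolution of $f$, by some training instance. Formally this is the inclusion $f(\boldsymbol{N}) \subseteq f(\boldsymbol{D}_{train})$ for any set $\boldsymbol{N}$ of normal instances drawn from the same population. I would take this as the operational definition and justify it by the large-$n$ regime already used in Proposition~1: as $n-k\to\infty$ with $k$ fixed, the empirical feature set of the training normals covers the support of the normal feature distribution.

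With this in hand, the key step is to specialize to $\boldsymbol{N}=\boldsymbol{N}_T$, the $k>0$ normal instances of $\boldsymbol{D}_{test}$. These are normal anatomies from the same population, so $f(\boldsymbol{N}_T)\subseteq f(\boldsymbol{D}_{train})$. Elementary set algebra then gives
\begin{equation*}
f(\boldsymbol{N}_T)\cap f(\boldsymbol{D}_{train}) = f(\boldsymbol{N}_T).
\end{equation*}
The method $M(\cdot;\theta)$ with fixed learned weights is a deterministic function of its input features. Applying it to both sides of this set equality yields $M(f(\boldsymbol{N}_T)\cap f(\boldsymbol{D}_{train});\theta)=M(f(\boldsymbol{N}_T);\theta)$, which is the statement.

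I would then draw the consequence the lemma is meant to support, namely ruling out the pathological model described before it. Since $M$ sees only features, and the features of $\boldsymbol{N}_T$ lie inside those of $\boldsymbol{D}_{train}$, a model that scores all of $\boldsymbol{D}_{train}$ as normal must also score $\boldsymbol{N}_T$ as normal. Hence it cannot label the training normals negative while labeling the test normals positive.

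The main obstacle is the covering step, i.e.\ justifying $f(\boldsymbol{N}_T)\subseteq f(\boldsymbol{D}_{train})$. Literally, two distinct images rarely share identical features. I would try first to state it as an approximate inclusion: every point of $f(\boldsymbol{N}_T)$ lies within $\varepsilon$ of $f(\boldsymbol{D}_{train})$, with $\varepsilon\to 0$ as $n\to\infty$ by a standard covering argument on a compact support. I would pair this with a continuity assumption on $M$ in feature space, so that the equality holds up to an error vanishing as $n\to\infty$. If that is too heavy for the paper's level, I would simply adopt exact inclusion as the stated ``universal negatives'' assumption, which is what the parenthetical in the lemma suggests.
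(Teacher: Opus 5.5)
Your proposal is correct and follows the paper's argument: the paper also reads the universal-negatives hypothesis as the inclusion $f(\boldsymbol{N}_T)\subseteq f(\boldsymbol{D}_{train})$, deduces $f(\boldsymbol{N}_T)\cap f(\boldsymbol{D}_{train})=f(\boldsymbol{N}_T)$, and applies $M(\cdot;\theta)$ to both sides. The only difference is that the paper phrases this as a contradiction ruling out an empty or strictly smaller intersection, whereas you argue directly and state openly that the exact inclusion is an assumption, not a consequence of large $n$.
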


\begin{proof}
    Let's assume $M(f(\boldsymbol{N}_T) \cap f(\boldsymbol{D}_{train}); \theta) \ne M(f(\boldsymbol{N}_T); \theta)$, which means the abnormality detection provides different predictions for each case. It is only possible in two cases:

    \begin{description}
        \item[Case 1:] $f(\boldsymbol{N}_T) \cap f(\boldsymbol{D}_{train}) = \emptyset$. However, it is impossible as $\boldsymbol{D}_{train}$ is not empty and represents the universal feature set of negative instances. Hence, $f(\boldsymbol{N}_T) \subseteq f(\boldsymbol{D}_{train})$ and $f(\boldsymbol{N}_T) \cap f(\boldsymbol{D}_{train}) \ne \emptyset$.

        \item[Case 2:] $|f(\boldsymbol{N}_T) \cap f(\boldsymbol{D}_{train})| < |f(\boldsymbol{N}_T)|$. Similar to \textit{Case 1}, this is also impossible as $\boldsymbol{D}_{train}$ represents the universal feature set of negative instances.
    \end{description}

    \noindent Therefore, $f(\boldsymbol{N}_T) \cap f(\boldsymbol{D}_{train}) = f(\boldsymbol{N}_T)$ which implies $M(f(\boldsymbol{N}_T) \cap f(\boldsymbol{D}_{train}); \theta) = M(f(\boldsymbol{N}_T); \theta)$.
\end{proof}

\begin{figure*}[!htp]
    \centering
    \includegraphics[width=1.0\linewidth, height=0.25\textheight]{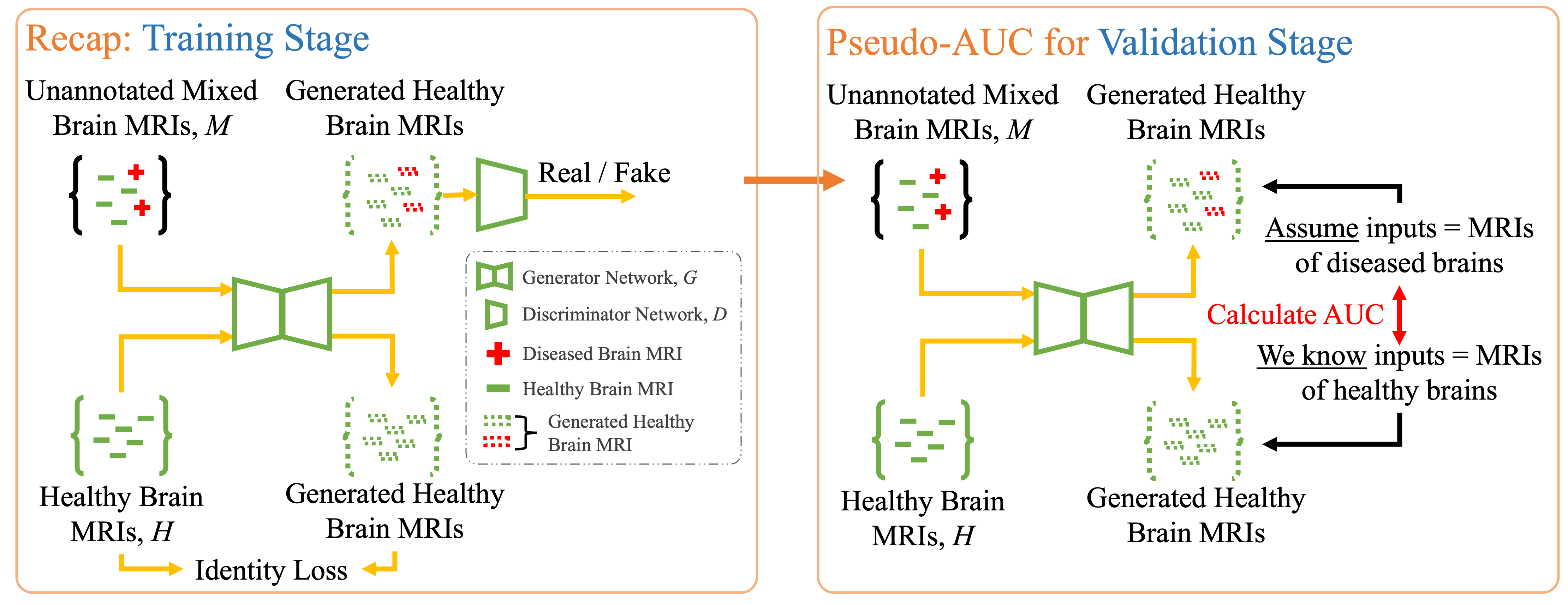}
    \caption{Overview of the proposed $\boldsymbol{AUC}_p$ metric calculation using Brainomaly.}
    \label{fig_aucp:aucp_overview}
\end{figure*}

An example of $\boldsymbol{AUC}_p$ calculation using the Brainomaly method is provided in~\figureautorefname~\ref{fig_aucp:aucp_overview}. Brainomaly learns from the given T1-weighted brain MRIs by iteratively going over the entire dataset. Here, the $\boldsymbol{D}_{train}$ contains T1-weighted MRI of healthy brains, and $\boldsymbol{D}_{test}$ contains an unlabeled mixture of T1-weighted images of both healthy and diseased brains. During training, Brainomaly produces a model after a fixed set of iterations. $\boldsymbol{AUC}_p$ scores for each model are calculated and the model producing the highest score is selected as the inference model. To calculate $\boldsymbol{AUC}_p$,  the {\em disease detection scores}, as discussed in \cite{siddiquee2023brainomaly}, for each model is generated. Since it is known that the set $\boldsymbol{D}_{train}$ contains only healthy brain MRIs, we assume the labels for MRIs in the unannotated mixed brain MRI set, $\boldsymbol{D}_{test}$. Using these {\em imperfect} annotations as ground truths $\boldsymbol{GT}_p$ along with the {\em disease detection scores} in the traditional $\boldsymbol{AUC}$ calculation,  $\boldsymbol{AUC}_p$ scores (\equationautorefname~\ref{eq:aucp_def}) are calculated. In~\sectionautorefname~\ref{subsec_aucp:results_model_selection}, it is shown that $\boldsymbol{AUC}_p$ selects a better-performing model for inference compared to FID, commonly used in existing works~\cite{rahman2022healthygan}.
\section{Experiment Design}
\label{sec_aucp:experiments}

\subsection{Datasets for Unsupervised Methods}

The same datasets as \cite{siddiquee2023brainomaly} is used for evaluation. The description of each dataset is provided below.

\subsubsection{Alzheimer's Disease Dataset}

This dataset is obtained from the ADNI database \href{http://adni.loni.usc.edu/}{(adni.loni.usc.edu)}, which is a large-scale public repository of clinical, neuropsychological, behavioral, genetic, and neuroimaging data to track the progression of Alzheimer's disease dementia. Using data from 3 studies that ADNI offers, ADNI-1, ADNI-2, and ADNI-GO,  T1 MRI scans of 536 Alzheimer's disease patients{\iffalse, 2606 mild cognitive impairment (MCI) patients,\fi} and 1271 healthy controls are collected and processed.

For $\boldsymbol{D}_{train}$,  501 MRIs from healthy controls are randomly selected. Two unlabeled mixed brain MRI sets ($\boldsymbol{D}_{test}$): {\em AD DS1} and {\em AD DS2}, meaning two instances of $\boldsymbol{D}_{test}$, are created. Each contains 268 MRIs from patients with Alzheimer's disease and 385 MRIs from healthy controls. Having two instances of $\boldsymbol{D}_{test}$ helps evaluate the performance of Brainomaly's model selected by $\boldsymbol{AUC}_p$ for Alzheimer's disease detection in both transductive and inductive settings in~\sectionautorefname~\ref{subsec_aucp:results_trans_vs_ind}.

All 3D MRIs in this dataset are registered to the {\em MNI152 1mm} template and skull stripped. The 3D MRIs are saved as 2D sagittal slices because all the competing methods perform prediction for each 2D slice. The slice-level predictions are aggregated by averaging them for patient-level predictions during evaluation.

\subsubsection{Headache Dataset}

The institutional dataset includes MRIs of 96 individuals with migraine, 48 with acute post-traumatic headache (APTH), and 49 with persistent post-traumatic headache (PPTH), diagnosed according to the International Classification of Headache Disorders (ICHD)\cite{IHS2018} diagnostic criteria. In addition MRIs of 104 healthy controls are collected. The institutional dataset is extended by including additional MRIs of 428 healthy controls from the publicly available IXI dataset~\cite{ixi} for a fair comparison with ~\cite{rahman2023headache}.

For the experiments, the model is trained by combining all headache types into one group first and then investigated each subgroup's performance separately. Total 232 MRIs of healthy controls are randomly selected as  $\boldsymbol{D}_{train}$. Similiar to the Alzheimer's disease experiment, two sets of unlabeled mixed brain MRI sets ($\boldsymbol{D}_{test}$): {\em HEAD DS1} and {\em HEAD DS2} are created. Each contains an equal number of MRIs for migraine (n = 48), APTH (n = 24), and healthy controls (n = 150). 24 out of 49 MRIs of those with PPTH were included in {\em HEAD DS1} and the rest in {\em HEAD DS2}. In addition, two sets of $\boldsymbol{D}_{test}$ are created to evaluate the performance of Brainomaly's model selected by the $\boldsymbol{AUC}_p$ for headache detection in both transductive and inductive settings in~\sectionautorefname~\ref{subsec_aucp:results_trans_vs_ind}.

All 3D MRIs in this dataset were registered to the {\em MNI152 1mm} template and skull stripped. Like the Alzheimer's disease dataset,  the 3D MRIs are converted and saved as 2D sagittal slices. And the slice-level predictions are aggregated by averaging them for patient-level predictions during evaluation.

\subsection{Datasets for Self-supervised \& Reconstruction-Based Methods}
\label{subsec_exp:self-supervosed-datasets}
\subsubsection{Pulmonary Disease Dataset: RSNA, VinDr-CXR} The RSNA dataset \cite{rsna-pneumonia-detection-challenge} used in the experiments consists of a large collection of chest X-ray images with two distinct classes: normal images and images showing lung opacities. Lung opacities are areas of increased radiographic density that may indicate pneumonia or other pulmonary pathologies, making them a clinically significant marker for abnormality detection. In this study, the dataset comprised of 8851 normal images and 6012 X-rays with lung opacities. For the purpose of abnormality detection, the model was trained exclusively on normal images. Specifically, 3851 normal X-rays are used to construct the training dataset $\boldsymbol{D}_{train}$. This training approach allows the model to learn the underlying distribution of healthy lung patterns, ensuring that any deviation from these learned features in test images can be flagged as anomalous. For evaluation, a balanced test set $\boldsymbol{D}_{test}$ consisting of 1000 normal images and 1000 lung opacity images is used.

 VinDr-CXR \cite{vinbigdata-chest-xray-abnormalities-detection} is a comprehensive chest X-ray dataset featuring richly annotated images that capture a broad spectrum of pulmonary conditions through expert evaluations of 14 distinct abnormalities comprising of 10,606 normal and 4,394 abnormal chest X-rays. This diverse imaging resource supports clinical research by providing high-quality radiological data that reflect both typical and pathological findings in the lungs. In the experiments, we implement a one-class learning approach by training solely on healthy examples using 4000 normal images as our training dataset $\boldsymbol{D}_{train}$ so the model can internalize the subtle patterns of normal lung anatomy. The model’s abnormality detection capabilities are then rigorously evaluated on a test set $\boldsymbol{D}_{test}$ consisting of 1000 normal images and 1000 abnormal images.

\subsubsection{Brain Tumor Dataset: Brain, BraTS21} The Brain dataset \cite{msoud_nickparvar_2021} comprises 2000 MRI slices devoid of tumors, 1621 exhibiting glioma, and 1645 displaying meningioma.
 Images that are devoid of tumors are sourced from Br35H5 \cite{hamada_brain_tumor_2021} and Saleh et al. \cite{9328072}, while the images depicting glioma and meningioma are derived from Saleh et al. \cite{9328072} and Cheng et al. \cite{10.1371/journal.pone.0140381}. For this study, 1000 images that do not show any cancer are used as $\boldsymbol{D}_{train}$ and 600 images that do show cancer along with 600 images that show cancer (300 with glioma and 300 with meningioma) as $\boldsymbol{D}_{test}$.

 The Brats2021 \cite{baid2021rsna} dataset offers 1251 MRI instances with a resolution of 155 × 240 × 240, along with appropriate voxel-level annotations for tumor areas. Each example comprises many modalities: T1, contrast enhanched T1, T2, and FLAIR.  FLAIR is studied in experiments due to its heightened sensitivity to tumor locations. During preprocessing, the scans are subjected to a central cropping operation to eliminate the vacant corners, yielding dimensions of 70 × 208 × 208. Total 1051 images are used for training slice extraction and the remaining 200 scans for testing slice extraction. To create $\boldsymbol{D}_{train}$, equally sample slices from the original normal axial slices devoid of malignancies in the scans are used, assuring balanced representation. Furthermore,  superfluous slices are removed to maintain a minimum interval of five slices between each sampled slice, ensuring adequate content diversity among adjacent slices. A total of 4,211 normal 2D axial slices have been taken from 1,051 FLAIR images. In accordance with the same idea, 828 normal and 1948 tumor slices from the remaining 200 FLAIR images are extracted to construct $\boldsymbol{D}_{test}$.
\subsubsection{Retinal Fundus Dataset: LAG} The LAG dataset \cite{8953932} comprises 5,824 color retinal fundus images obtained from Beijing Tongren Hospital, including 2,392 confirmed positive and 3,432 negative glaucoma samples. Each image is rigorously diagnosed by qualified glaucoma specialists who evaluate both morphological and functional aspects such as intra-ocular pressure, visual field loss, and manual optic disc assessment to establish a gold standard binary label for glaucoma. In accordance with the Helsinki Declaration and owing to the retrospective, fully anonymized nature of the dataset, patient consent was not required. Additionally, an innovative eye-tracking experiment was conducted: each initially blurred fundus image is presented to ophthalmologists, who use mouse clicks to successively clear circular regions (with a fixed radius of 40 pixels on 500 × 500 images) to reveal regions of interest. The order and coordinates of these clearings are then used to generate an attention map via a 2D Gaussian filter ($\sigma = 25$), capturing the diagnostic focus of the specialists. In the experiments, 1500 normal images are used as $\boldsymbol{D}_{train}$, and 811 normal and 811 anomalous images as $\boldsymbol{D}_{test}$.
\subsubsection{Skin Lesion Dataset: ISIC2018} The ISIC 2018 dataset \cite{DBLP:journals/corr/abs-1902-03368}, provided as part of Task 3 of the ISIC 2018 challenge, is a large-scale, publicly available collection of dermoscopic images curated for skin lesion analysis. It comprises seven lesion categories, with the nevus (NV) class designated as the normal category, while the remaining six categories represent various abnormal lesions such as melanoma, basal cell carcinoma, and others. The dataset features high-quality annotations from expert dermatologists, ensuring that each image is accurately labeled for clinical relevance. In the experiments, 6705 normal (NV) images from the official training set are used as $\boldsymbol{D}_{train}$ to model the typical features of benign lesions. For evaluation, a test set $\boldsymbol{D}_{test}$ consisting of 909 normal (NV) images along with 603 abnormal images drawn from the six non-NV categories is constructed, providing a balanced framework to assess the performance of the proposed abnormality detection approach.
\subsubsection{Lymph Node Metastasis Dataset: Camelyon16} A refined version of Camelyon16 \cite{bejnordi2017diagnostic} for Alzheimer's disease (AD) as provided by Bao et al.\cite{bao2024bmad}, merging their validation and testing sets is used as  $\boldsymbol{D}_{test}$. Whole slide images (WSIs) are segmented into 256 × 256 patches at 40x magnification, which are randomly chosen to create an image dataset for AD. In the experiment,  5088 normal images are used as  $\boldsymbol{D}_{train}$, and 1120 normal images along with 1113 abnormal images as $\boldsymbol{D}_{test}$.

\subsection{Evaluation Methods for Application of $\boldsymbol{{AUC}_p}$}

For an empirical analysis of $\boldsymbol{AUC}_p$, we have applied it to the  reference abnormality detection methods, HealthyGAN and Brainomaly. First, the abnormality detection performance of both the methods for Alzheimer's disease (\sectionautorefname~\ref{subsec_aucp:ad_results}) and headache (\sectionautorefname~\ref{subsec_aucp:headache_results}) detection are evaluated. The detection performance of their models selected by the $\boldsymbol{AUC}_p$ with their models previously selected by FID in \cite{rahman2022healthygan} and \cite{siddiquee2023brainomaly} are compared, respectively. Five state-of-the-art unsupervised abnormality detection methods are included as the benchmark. Among these, DDAD~\cite{cai2022dual}, HealthyGAN~\cite{rahman2022healthygan}, and Brainomaly~\cite{siddiquee2023brainomaly} utilize unlabeled instances during training. On the other hand, ALAD~\cite{zenati2018adversarially}, ALOOC~\cite{sabokrou2018adversarially}, f-AnoGAN~\cite{schlegl2019f}, and Ganomaly~\cite{akcay2018ganomaly} learn only from images of healthy subjects. In addition, we compared the proposed $\boldsymbol{AUC}_p$ metric with FID (\sectionautorefname~\ref{subsec_aucp:results_model_selection}) using models selected from Brainomaly and analyzed Brainomaly's performance in transductive and inductive learning settings (\sectionautorefname~\ref{subsec_aucp:results_trans_vs_ind}) using the model selected by $\boldsymbol{AUC}_p$.

 For the self-supervised methods namely CutPaste \cite{li2021cutpaste}, Foreign Patch Interpolation (FPI) \cite{tan2020detecting}, Poisson Image Interpolation (PII) \cite{tan2021detecting}, and Natural Synthetic Anomaly (NSA) \cite{schluter2022natural} model performance using the $\boldsymbol{AUC}$ metric, following the approach described in \cite{cai2024medianomaly} is evaluated. We also evaluated different image-reconstruction based methods (AE-U ~\cite{mao2020abnormality}, MemAE \cite{gong2019memorizing}, AE, AE-L1, AE-SSIM \cite{bergmann2018improving}, and AE-Perceptual \cite{shvetsova2021anomaly}) and feature-reconstruction methods (FAE-SSIM, FAE-MSE \cite{meissen2022unsupervised}) which are also described in \cite{cai2024medianomaly}. However, unlike \cite{cai2024medianomaly}, where the inference model is selected after training for a fixed number of epochs, a dynamic selection strategy is adopted. Specifically, the inference model is chosen based on the highest $\boldsymbol{AUC}_p$ score observed across all training epochs, ensuring that the selected model represents the optimal performance achieved during training. The evaluations of these self-supervised and reconstruction-based methods across seven publicly available datasets (Sec. \ref{subsec_exp:self-supervosed-datasets}) and the results are presented in Sec. \ref{subsec:self_supervised_results}

\subsection{Implementation Details}

Similar to \cite{rahman2022healthygan} and \cite{siddiquee2023brainomaly}, all the models operate on 2D MRI slices (sagittal slices) for all experiments. A central crop is performed to remove empty regions outside the brain, resulting in $192\times192$ sagittal slices for both datasets. For training HealthyGAN and Brainomaly,  $\lambda_{id}$ = 1 and a batch size of 16. For HealthyGAN,   $\lambda_{gp} = 10$, $\lambda_{rec} = 1$, $\lambda_{f} = 0.1$, $\lambda_{fz} = 1$, and $\lambda_{fs} = 1$. Both HealthyGAN and Brainomaly are trained for 400,000 iterations and saved a model for $\boldsymbol{AUC}_p$ calculation after every 10,000 iterations. Adam optimizer is used for both methods with a learning rate of $1e^{-4}$. The learning rate has been decayed for the last 100,000 iterations.

Following \cite{cai2024medianomaly}, we implemented seven one-stage self-supervised approaches that learn from synthetically generated anomalies and are then applied to real anomalies: CutPaste-Normal, CutPaste-3way, and CutPaste (variants implemented as in \cite{li2021cutpaste}); Foreign Patch Interpolation (FPI) \cite{tan2020detecting}, which replaces a local patch with one from another normal image; FPI-Poisson, which augments FPI by blending pasted patches via Poisson editing as in \cite{tan2021detecting}; Poisson Image Interpolation (PII) \cite{tan2021detecting}, which directly uses Poisson blending to create seamless synthetic anomalies; and Natural Synthetic Anomalies (NSA) \cite{schluter2022natural}, which injects structured perturbations to produce diverse, realistic pseudo-anomalies. We include two feature-space autoencoder baselines, FAE-SSIM and FAE-MSE, which reconstruct deep features of normal images and score anomalies by the residual between original and reconstructed features (measured with SSIM or MSE, respectively). We assess six image-reconstruction based autoencoders: a vanilla AE, its loss-specific variants AE-L1 and AE-SSIM, a perceptual variant AE-Perceptual, the MemAE model with a memory module that constrains reconstruction to normal patterns, and AEU, which incorporates uncertainty to modulate reconstruction error. All image-reconstruction models are trained on normal data; anomaly scores are derived from pixel/feature residuals consistent with prior work. For every method, performance is reported with the $\boldsymbol{AUC}$ metric.

\section{Results and Analyses}
\label{sec_aucp:results}

\begin{table*}[!htp]
    \caption{Comparing Alzheimer's disease and headache detection performance of HealthyGAN and Brainomaly's models selected by $\boldsymbol{AUC}_p$ and FID with five state-of-the-art abnormality detection methods. The models used to report the number in {\em DS1} columns used {\em DS2} as the unlabeled mixed data for training and $\boldsymbol{AUC}_p$ calculation, but the detection performance was calculated on {\em DS1}. Therefore, the detection performances reported in this table are from an inductive setting, which means the models were applied to test data that was not part of the training process. We also report the detection performance of the model when trained and tested on {\em DS2}, known as transductive setting, in~\sectionautorefname~\ref{subsec_aucp:results_trans_vs_ind}. Similarly, the columns {\em DS2} report detection performance on {\em DS2} split in an inductive setting. Numbers in \textbf{boldface} indicate the best results, and \underline{underlined} numbers indicate the second-best results. As seen, our $\boldsymbol{AUC}_p$ metric helps both HealthyGAN and Brainomaly improve their detection performances.}
    \centering
    \begin{tabular}{l | c | c | c | c | c | c}
        \hline
        \multirow{2}{*}{Methods} & \multicolumn{3}{c |}{\textbf{Alzheimer's Dataset}} & \multicolumn{3}{c}{\textbf{Headache Dataset}} \\
        \cline{2-7}
         & {\em AD DS1} & {\em AD DS2} & \textbf{{\em Avg}} & {\em HEAD DS1} & {\em HEAD DS2} & \textbf{{\em Avg}}\\
        \hline
        ALAD \cite{zenati2018adversarially} & 0.5127 & 0.5222 & 0.5175 & 0.6920 & 0.6990 & 0.6955 \\
        ALOOC \cite{sabokrou2018adversarially} & 0.4746 & 0.4670 & 0.4708 & 0.6566 & 0.3044 & 0.4805 \\
        f-AnoGAN \cite{schlegl2019f} & 0.6093 & 0.5946 & 0.6020 & 0.3925 & 0.4354 & 0.4071 \\
        Ganomaly \cite{akcay2018ganomaly} & 0.6048 & 0.5864 & 0.5956 & 0.6514 & 0.7313 & 0.6913 \\
        DDAD \cite{cai2022dual} & 0.5955 & 0.5897 & 0.5926 & 0.6431 & 0.6128 & 0.6280 \\
        \hline
        HealthyGAN & 0.4598 & 0.5222 & 0.4910 & 0.7410 & 0.7979 & 0.7695 \\
        + $\boldsymbol{AUC}_p$ & 0.5905 & 0.6034 & 0.5970 & 0.8276 & 0.7899 & 0.8088 \\
        \hline
        Brainomaly & \underline{0.6389} & \underline{0.6453} & \underline{0.6421} & \underline{0.9002} & \underline{0.8589} & \underline{0.8796} \\
        \textbf{+ $\boldsymbol{AUC}_p$} & \textbf{0.6452} & \textbf{0.6648} & \textbf{0.6550} & \textbf{0.9041} & \textbf{0.8878} & \textbf{0.8960} \\
        \hline
    \end{tabular}
    \label{tab_aucp:detection_results}
\end{table*}
\subsection {Unsupervised Methods}
\subsubsection{Alzheimer's Disease Detection}
\label{subsec_aucp:ad_results}

\tableautorefname~\ref{tab_aucp:detection_results} compares Alzheimer's disease (AD) detection performance of models from both HealthyGAN and Brainomaly selected using FID and $\boldsymbol{AUC}_p$ with five state-of-the-art methods: ALAD \cite{zenati2018adversarially}, ALOOC \cite{sabokrou2018adversarially}, f-AnoGAN \cite{schlegl2019f}, GANomaly \cite{akcay2018ganomaly}, DDAD \cite{cai2022dual}. The models used to report the number in {\em AD DS1} column used {\em AD DS2} as the unlabeled mixed data for training and $\boldsymbol{AUC}_p$ calculation, but the detection performance was calculated on {\em AD DS1}. Therefore, the detection performances reported in~\tableautorefname~\ref{tab_aucp:detection_results} are from an inductive setting, which means the models were applied to test data that were not part of the training process.  The detection performance of the model when trained and tested on {\em AD DS2}, known as transductive setting, in~\sectionautorefname~\ref{subsec_aucp:results_trans_vs_ind} are also reported. Similarly, the column {\em AD DS2} reports detection performance on {\em AD DS2} split in an inductive setting. As seen, the $\boldsymbol{AUC}_p$ improved Alzheimer's disease detection performance for both HealthyGAN and Brainomaly. The margin of improvement is much higher for HealthyGAN, improving by +21.59\% on average for AD detection. Models selected by $\boldsymbol{AUC}_p$ for Brainomaly improved its AD detection performance by +2.01\% on average, keeping it the best-performing method in this benchmark. Among the competing five state-of-the-art abnormality detection methods, f-AnoGAN performs the best AD detection with an AUC score of 0.6020. Brainomaly's model selected by $\boldsymbol{AUC}_p$ outperforms f-AnoGAN for AD detection by a large +8.8\% margin.

\subsubsection{Headache Detection}
\label{subsec_aucp:headache_results}

\tableautorefname~\ref{tab_aucp:detection_results} also provides a comprehensive analysis of the efficacy of HealthyGAN and Brainomaly's model selected using FID and $\boldsymbol{AUC}_p$ in detecting headaches, juxtaposed against five state-of-the-art methods. The headache detection performances showcased in the {\em HEAD DS1} column were trained using {\em HEAD DS2} as the source for unlabeled mixed data, facilitating training and $\boldsymbol{AUC}_p$ calculation. However, the ultimate assessment of detection performance was conducted on the {\em HEAD DS1} dataset, thereby reflecting an inductive setting where models were confronted with unseen test data.

The detailed evaluation extends to the transductive setting as well, elaborated upon in~\sectionautorefname~\ref{subsec_aucp:results_trans_vs_ind}, where models were trained and tested on the {\em HEAD DS2} dataset. Correspondingly, the column {\em HEAD DS2} reports detection performance on {\em HEAD DS2} split in an inductive setting.

Notably, the adoption of $\boldsymbol{AUC}_p$ significantly enhances the detection prowess of both HealthyGAN and Brainomaly, albeit with varying degrees of improvement. While HealthyGAN exhibits a remarkable advancement, with an average enhancement of +5.11\% in headache detection, Brainomaly experiences a more modest increase of +1.86\%, yet it continues to assert its dominance as the top-performing method in this benchmark.

Among the competing state-of-the-art methods, ALAD emerges as a standout performer in headache detection with an AUC score of 0.6955. However, the model selected by $\boldsymbol{AUC}_p$ from Brainomaly eclipses ALAD's performance by a substantial margin of +28.83\%.

\subsubsection{Transductive vs. Inductive Learning}
\label{subsec_aucp:results_trans_vs_ind}

Using an unlabeled set of mixed brain MRIs (\figureautorefname~\ref{fig_aucp:aucp_overview}) allows Brainomaly to operate in transductive and inductive learning modes. In \cite{siddiquee2023brainomaly}, it is shown that Brainomaly performs statistically similar in both operating settings. Here, the model's performance selected by $\boldsymbol{AUC}_p$ in both transductive and inductive settings are evaluated and compared with the model previously selected by FID. Please note that for the transductive learning setting,  both AD and headache detection on the unlabeled mixed brain MRI set used during training are evaluated. In contrast, for the inductive learning setting, an additional unseen test sets for AD detection and headache detection evaluation are utilized. Please also note that the performance reported in~\tableautorefname~\ref{tab_aucp:detection_results} and analyzed in the previous two subsections were in inductive settings.

\begin{table*}[!htp]
    \caption{An ablation study investigating the efficacy of Brainomaly in transductive and inductive learning settings. The study assesses the performance, measured by the AUC metric for Alzheimer's Disease and Headache detection datasets. The table demonstrates that Brainomaly's performance fluctuates across different datasets, as expected in data-driven methods. However, it reveals a statistically insignificant difference ({\em p}-value $>$ 0.005) in Brainomaly's performance between the transductive and inductive learning settings. The models selected by the $\boldsymbol{AUC}_p$ metric consistently produce higher detection rates across datasets and operating settings.}
    \centering
    \begin{tabular}{l l c c c c}
        \hline
        \parbox[t]{8mm}{\multirow{8}{*}{\scriptsize\rotatebox[origin=c]{90}{\textbf{FID} selected Models}}} & \multicolumn{5}{c}{\textbf{Alzheimer's Disease Dataset}} \\
        \cline{2-6}
        & {\em Settings} & {\em AD DS1} & {\em AD DS2} & Avg. & {\em p}-value \\
        \cline{2-6}
        & Transductive & 0.6180 & 0.6771 & 0.6476 & \multirow{2}{*}{0.8714} \\
        & Inductive & 0.6389 & 0.6453 & 0.6421 \\
        \cline{2-6}
        & \multicolumn{5}{c}{\textbf{Headache Dataset}} \\
        \cline{2-6}
        & {\em Settings} & {\em HEAD DS1} & {\em HEAD DS2} & Avg. & {\em p}-value \\
        \cline{2-6}
        & Transductive & 0.8807 & 0.9120 & 0.8964 & \multirow{2}{*}{0.5832} \\
        & Inductive & 0.9002 & 0.8589 & 0.8796 \\
        \hline
        \hline
        \parbox[t]{8mm}{\multirow{8}{*}{\scriptsize\rotatebox[origin=c]{90}{ $\boldsymbol{AUC}_p$  selected models}}} & \multicolumn{5}{c}{\textbf{Alzheimer's Disease Dataset}} \\
        \cline{2-6}
        & {\em Settings} & {\em AD DS1} & {\em AD DS2} & Avg. & {\em p}-value \\
        \cline{2-6}
        & Transductive & 0.6526 & 0.6825 & 0.6676 & \multirow{2}{*}{0.5553} \\
        & Inductive & 0.6452 & 0.6648 & 0.6550 \\
        \cline{2-6}
        & \multicolumn{5}{c}{\textbf{Headache Dataset}} \\
        \cline{2-6}
        & {\em Settings} & {\em HEAD DS1} & {\em HEAD DS2} & Avg. & {\em p}-value \\
        \cline{2-6}
        & Transductive & 0.9182 & 0.8633 & 0.8908 & \multirow{2}{*}{0.8726} \\
        & Inductive & 0.9041 & 0.8878 & 0.8960 \\
        \hline
    \end{tabular}
    \label{tab_aucp:transductive_vs_inductive}
\end{table*}

\begin{figure*}[!htp]
    \centering
    \includegraphics[width=0.7\linewidth, height=0.23\textheight]{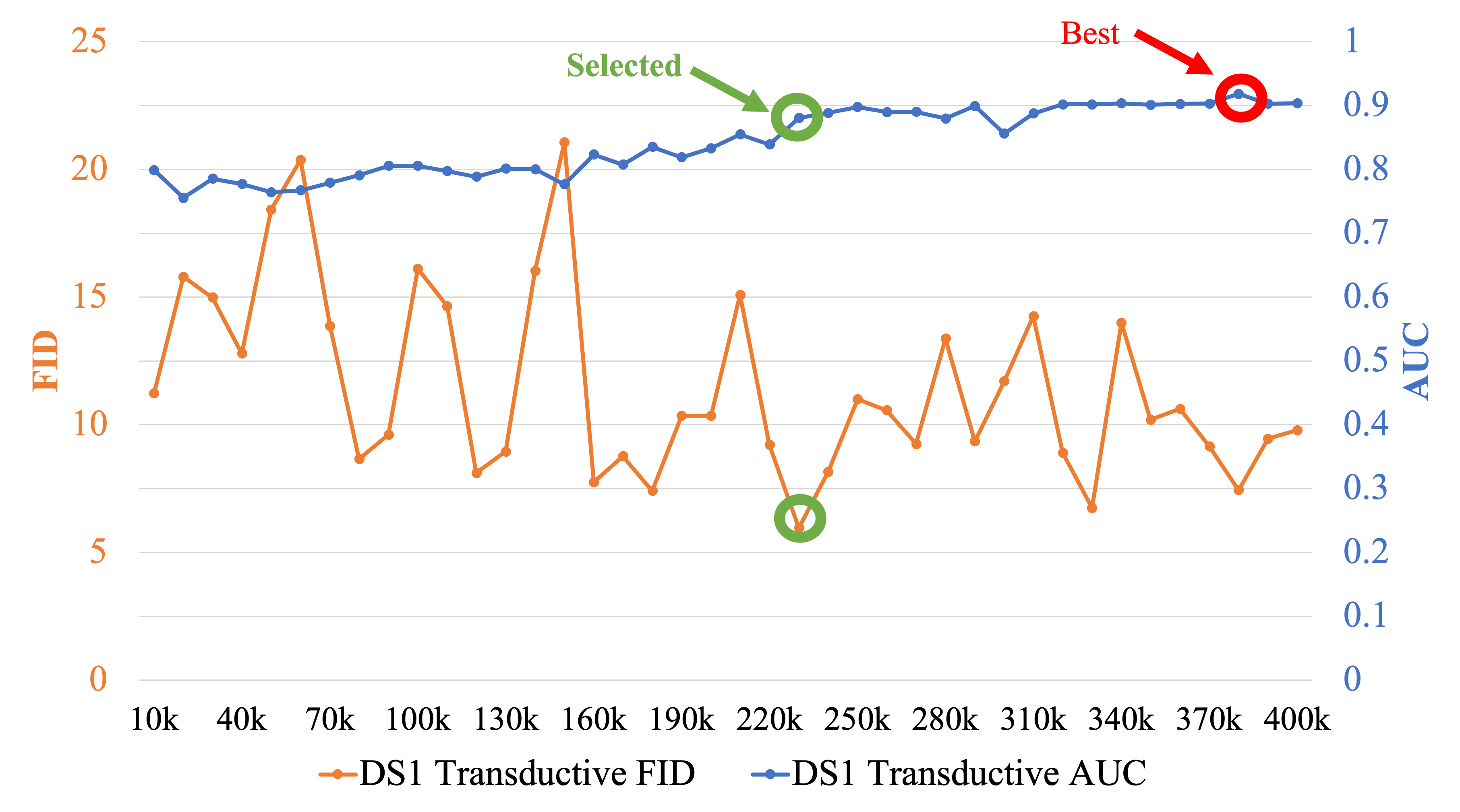}
    \caption{Correlation of FID scores and actual abnormality detection performance of Brainomaly's models in AUC on headache ({\em HEAD DS1}) dataset. The X-axis shows the training iteration numbers when the models were saved. In \cite{rahman2022healthygan} and \cite{siddiquee2023brainomaly}, FID is used for selecting the inference model. FID selects a generative model based on how realistic images (lowest FID score) the model can produce. In our case, FID selected models for inference that could produce the most realistic images of normal or healthy anatomies. However, our objective is not to generate realistic images of healthy anatomies. Rather, we aim to achieve the best abnormality detection performance. Since FID does not select an inference model directly based on our abnormality detection objective, it often selects a sub-optimal model for abnormality detection. More in~\sectionautorefname~\ref{subsec_aucp:results_model_selection}.}
    \label{fig_aucp:fid_vs_aucp}
\end{figure*}

\tableautorefname~\ref{tab_aucp:transductive_vs_inductive} summarizes the performance of Brainomaly's models when selected using FID and $\boldsymbol{AUC}_p$ metrics for Alzheimer's disease and headache detection in both transductive and inductive operating settings. For Alzheimer's disease and headache detection, it is observed  the model's performance is slightly influenced by the random split of the unannotated set of mixed brain MRIs used during the training. For example, the AUCs are slightly higher for {\em AD DS2} compared to {\em AD DS1} and similarly for {\em HEAD DS1} compared to {\em HEAD DS2}. However, the overall performance of Brainomaly's models, irrespective of model selection metrics, for AD and headache detection is statistically the same ($p$-value $>$ 0.005) in both transductive and inductive operating settings. Please note that the model selected by $\boldsymbol{AUC}_p$ consistently outperforms the model selected by FID. These results show that Brainomaly generalizes well on unseen test data, and using $\boldsymbol{AUC}_p$ metric can help to select a higher performing model without any additional labeling cost.

\begin{table}[!tp]
    \caption{Pearson correlation of FID and the proposed $\boldsymbol{AUC}_p$ metric with the actual AUC metric. $\boldsymbol{AUC}_p$ has a stronger correlation with the actual AUC. Therefore, in a situation like ours, $\boldsymbol{AUC}_p$ is a superior measure to FID for choosing the inference model. The correlation numbers are in absolute terms. Numbers in \textbf{boldface} indicate the best correlation. The metrics are calculated in a transductive setting.}
    \centering
    \begin{tabular}{l c c}
        \hline
        \multicolumn{3}{c}{\textbf{Alzheimer's Disease Dataset}} \\
        \hline
         Metric & {\em AD DS1} & {\em AD DS2} \\
        \hline
        FID & 0.5701 & 0.4773 \\
        \textbf{$\boldsymbol{AUC}_p$ (Our)} & \textbf{0.9583} & \textbf{0.9656} \\
        \hline
        \hline
        \multicolumn{3}{c}{\textbf{Headache Dataset}} \\
        \hline
        Metric & {\em HEAD DS1} & {\em HEAD DS2} \\
        \hline
        FID & 0.5227 & 0.3187 \\
        \textbf{$\boldsymbol{AUC}_p$ (Our)} & \textbf{0.9528} & \textbf{0.5986} \\
        \hline
    \end{tabular}
    \label{tab_aucp:model_selection_pearson}
\end{table}

\begin{table}[!tp]
    \caption{Performance comparison of the final models selected by FID and the proposed $\boldsymbol{AUC}_p$ metrics. As shown in~\tableautorefname~\ref{tab_aucp:model_selection_pearson}, $\boldsymbol{AUC}_p$ strongly correlates with the actual AUC metric. This table shows that the model selected by the $\boldsymbol{AUC}_p$ outperforms the model selected by FID. Numbers in \textbf{boldface} indicate the best AUCs.}
    \centering
    \begin{tabular}{l l c c}
        \hline
        \multicolumn{4}{c}{\textbf{Alzheimer's Disease Dataset}} \\
        \hline
        Setting & Metric & {\em AD DS1} & {\em AD DS2} \\
        \hline
        \multirow{2}{*}{Transductive} & FID & 0.618 & 0.6771 \\
         & \textbf{$\boldsymbol{AUC}_p$ (Our)} & \textbf{0.6526} & \textbf{0.6825} \\
        \hline
        \multirow{2}{*}{Inductive} & FID & 0.6389 & 0.6453 \\
         & \textbf{$\boldsymbol{AUC}_p$ (Our)} & \textbf{0.6452} & \textbf{0.6648} \\
        \hline
        \hline
        \multicolumn{4}{c}{\textbf{Headache Dataset}} \\
        \hline
        Setting & Metric & {\em HEAD DS1} & {\em HEAD DS2} \\
        \hline
        \multirow{2}{*}{Transductive} & FID & 0.8807 & \textbf{0.9120} \\
         & \textbf{$\boldsymbol{AUC}_p$ (Our)} & \textbf{0.9182} & 0.8633 \\
        \hline
        \multirow{2}{*}{Inductive} & FID & 0.9002 & 0.8589 \\
         & \textbf{$\boldsymbol{AUC}_p$ (Our)} & \textbf{0.9041} & \textbf{0.8878} \\
        \hline
    \end{tabular}
    \label{tab_aucp:model_selection_auc}
\end{table}

\subsubsection{Model Selection for Inference: FID vs. $\boldsymbol{AUC}_p$ }
\label{subsec_aucp:results_model_selection}

In \cite{rahman2022healthygan} and \cite{siddiquee2023brainomaly}, FID is used for selecting the inference model. FID selects a generative model based on how realistic the images are (lowest FID score), here, realistic images of normal anatomies. However, the objective is not to generate realistic images of healthy anatomies but to achieve the best abnormality detection performance. Since FID does not select an inference model directly based on the abnormality detection objective, we find that it often selects a sub-optimal model for abnormality detection.~\figureautorefname~\ref{fig_aucp:fid_vs_aucp} shows such phenomenon. In this section, we formally compare (\tableautorefname~\ref{tab_aucp:model_selection_pearson}) the correlation of models' FID and $\boldsymbol{AUC}_p$ scores with models' actual abnormality detection performance. In~\tableautorefname~\ref{tab_aucp:model_selection_auc}, we also contrast the abnormality detection performances among the best models selected by each competing metric.

In~\tableautorefname~\ref{tab_aucp:model_selection_pearson}, it is shown  that $\boldsymbol{AUC}_p$ score in~\sectionautorefname~\ref{sec_aucp:method} has a stronger correlation with the actual (when all the annotations are available) AUC scores. Therefore, the proposed $\boldsymbol{AUC}_p$ metric renders itself a better metric than FID for selecting the model for inference. To further validate, we have provided the AUC scores obtained by the best models according to FID and the $\boldsymbol{AUC}_p$ scores in both transductive and inductive learning settings for each dataset in~\tableautorefname~\ref{tab_aucp:model_selection_auc}. The models selected by  $\boldsymbol{AUC}_p$ metric dominate in detection performance over the models selected by FID.

\begin{table*}[!ht]
\caption{Performance comparison of the final (last) models versus the models selected by the proposed $\boldsymbol{AUC}_{p}$ metric across multiple image reconstruction, feature-reconstruction and self-supervised methods on seven datasets. The “Last” column reports the $\boldsymbol{AUC}$ achieved by the model at the end of training, while the “AUCp” column indicates the AUC from the epoch that achieved the highest $\boldsymbol{AUC}_{p}$ score. The results show that $\boldsymbol{AUC}_{p}$-based selection generally improves performance across methods, with only a few exceptions, underscoring both the effectiveness of $\boldsymbol{AUC}_{p}$ under specific conditions.}
\centering
\resizebox{\textwidth}{!}{
\begin{tabular}{l l*{14}{c}}
\toprule
\multirow{2}{*}{{Category}} &  & \multicolumn{2}{c}{RSNA} & \multicolumn{2}{c}{VIN} & \multicolumn{2}{c}{ISIC} & \multicolumn{2}{c}{LAG} & \multicolumn{2}{c}{C16} & \multicolumn{2}{c}{Brain} & \multicolumn{2}{c}{Brats21} \\
\cmidrule(lr){3-4}\cmidrule(lr){5-6}\cmidrule(lr){7-8}\cmidrule(lr){9-10}\cmidrule(lr){11-12}\cmidrule(lr){13-14}\cmidrule(lr){15-16}
& Method & Last & AUCp & Last & AUCp & Last & AUCp & Last & AUCp & Last & AUCp & Last & AUCp & Last & AUCp \\
\midrule
\multirow{7}{*}{{Self-sup}} 
 & CutPaste-Normal & 0.7253 & \textbf{0.7314} & 0.5705 & \textbf{0.6064} & 0.7442 & \textbf{0.7704} & 0.6052 & \textbf{0.6391} & 0.4289 & \textbf{0.5792} & 0.9161 & \textbf{0.9167} & 0.5190 & \textbf{0.5576} \\
 & CutPaste-3way   & 0.7671 & \textbf{0.7422} & 0.6123 & \textbf{0.6157} & 0.7231 & \textbf{0.7606} & 0.6095 & \textbf{0.6191} & 0.4279 & \textbf{0.5413} & 0.7642 & \textbf{0.7717} & 0.5385 & \textbf{0.7419} \\
 & Anapaste        & 0.8882 & \textbf{0.8862} & 0.7107 & \textbf{0.7143} & 0.8180 & \textbf{0.8195} & 0.7304 & \textbf{0.7364} & 0.5574 & \textbf{0.6214} & 0.9435 & \textbf{0.9436} & 0.7469 & \textbf{0.7497} \\
 & CutPaste        & 0.6117 & \textbf{0.7320} & 0.5459 & \textbf{0.6722} & 0.5442 & \textbf{0.6806} & 0.4892 & \textbf{0.6890} & 0.6418 & \textbf{0.7058} & 0.2208 & \textbf{0.8277} & 0.4650 & \textbf{0.6013} \\
 & FPI             & 0.4937 & \textbf{0.7067} & 0.4746 & \textbf{0.6914} & 0.6749 & \textbf{0.7524} & 0.5995 & \textbf{0.6994} & 0.6301 & \textbf{0.6468} & 0.2932 & \textbf{0.8881} & 0.5142 & \textbf{0.7645} \\
 & FPI-Poisson     & 0.8124 & \textbf{0.8204} & 0.5670 & \textbf{0.7612} & 0.4655 & \textbf{0.5737} & 0.3934 & \textbf{0.6701} & 0.3719 & \textbf{0.6546} & 0.2425 & \textbf{0.7809} & 0.8420 & \textbf{0.8736} \\
 & NSA             & 0.8054 & \textbf{0.8673} & 0.4999 & \textbf{0.7622} & 0.6439 & \textbf{0.7114} & 0.6180 & \textbf{0.7540} & 0.6083 & \textbf{0.6604} & 0.7853 & \textbf{0.9186} & \textbf{0.8734} & 0.8123 \\

\midrule
\multirow{6}{*}{{Image-rec}} 
 & AEU            & 0.7835 & \textbf{0.8763} & 0.6833 & \textbf{0.7334} & 0.6926 & \textbf{0.7098} & 0.8173 & \textbf{0.8442} & 0.5991 & \textbf{0.6209} & 0.9042 & \textbf{0.9380} & 0.8153 & \textbf{0.8375} \\
 & MemAE          & 0.6572 & \textbf{0.6963} & 0.5465 & \textbf{0.5681} & 0.7014 & \textbf{0.7410} & 0.7583 & \textbf{0.7995} & 0.3665 & \textbf{0.3807} & 0.6030 & \textbf{0.8435} & 0.8095 & \textbf{0.8217} \\
 & AE             & \textbf{0.6702} & 0.6629 & 0.4696 & \textbf{0.5061} & 0.7301 & \textbf{0.7494} & 0.8214 & \textbf{0.8308} & 0.3645 & \textbf{0.3682} & 0.8515 & \textbf{0.8655} & \textbf{0.8248} & \textbf{0.8248} \\
 & AE-L1          & 0.6689 & \textbf{0.6837} & 0.5596 & \textbf{0.5623} & 0.7482 & \textbf{0.7534} & 0.7513 & \textbf{0.7728} & 0.3404 & \textbf{0.3433} & \textbf{0.8330} & \textbf{0.8330} & 0.7851 & \textbf{0.8221} \\
 & AE-SSIM        & 0.6426 & \textbf{0.6472} & 0.4477 & \textbf{0.4577} & 0.7688 & \textbf{0.7704} & \textbf{0.6802} & \textbf{0.6802} & \textbf{0.3949} & \textbf{0.3949} & 0.9217 & \textbf{0.9355} & 0.8010 & \textbf{0.7985} \\
 & AE-Perceptual  & 0.8649 & \textbf{0.8669} & 0.6988 & \textbf{0.7027} & 0.6904 & 0.6877 & 0.8549 & \textbf{0.8562} & 0.7580 & \textbf{0.7586} & \textbf{0.9558} & \textbf{0.9558} & 0.8568 & \textbf{0.8572} \\
\midrule
\multirow{2}{*}{{Feature-rec}} 
 & FAE-SSIM       & \textbf{0.9053} & \textbf{0.9053} & 0.6671 & \textbf{0.6704} & \textbf{0.7875} & \textbf{0.7875} & \textbf{0.8342} & \textbf{0.8342} & 0.4857 & \textbf{0.4958} & \textbf{0.9205} & \textbf{0.9205} & \textbf{0.7620} & \textbf{0.7620} \\
 & FAE-MSE        & \textbf{0.7423} & \textbf{0.7423} & \textbf{0.5309} & \textbf{0.5309} & \textbf{0.7473} & \textbf{0.7473} & \textbf{0.7349} & \textbf{0.7349} & \textbf{0.4473} & \textbf{0.4473} & 0.8273 & \textbf{0.8301} & 0.7870 & \textbf{0.7918} \\

\bottomrule
\end{tabular}
}
\label{tab:results_all_grouped_vertical}
\end{table*}

\begin{table*}[t]
\centering
\caption{Performance comparison of reconstruction-based anomaly detection methods using standard reconstruction loss versus the proposed AUCp-based inference model selection across multiple medical imaging datasets. The \textit{Rec\_Loss} column reports the image-level AUC obtained when the inference model is selected using reconstruction loss, while the \textit{AUCp} column indicates the AUC achieved by the model selected based on the proposed AUCp criterion.}
\label{tab:aucp_reconstruction_comparison}
\resizebox{\textwidth}{!}{
\begin{tabular}{lcccccccccccccc}
\toprule
\textbf{Method} 
& \multicolumn{2}{c}{\textbf{RSNA}} 
& \multicolumn{2}{c}{\textbf{VIN}} 
& \multicolumn{2}{c}{\textbf{ISIC}} 
& \multicolumn{2}{c}{\textbf{LAG}} 
& \multicolumn{2}{c}{\textbf{C16}} 
& \multicolumn{2}{c}{\textbf{Brain}} 
& \multicolumn{2}{c}{\textbf{Brats21}} \\
\cmidrule(lr){2-3} \cmidrule(lr){4-5} \cmidrule(lr){6-7} \cmidrule(lr){8-9}
\cmidrule(lr){10-11} \cmidrule(lr){12-13} \cmidrule(lr){14-15}
& Rec\_Loss & AUCp
& Rec\_Loss & AUCp
& Rec\_Loss & AUCp
& Rec\_Loss & AUCp
& Rec\_Loss & AUCp
& Rec\_Loss & AUCp
& Rec\_Loss & AUCp \\
\midrule
AEU          
& 0.8692 & \textbf{0.8763}
& 0.7141 & \textbf{0.7334}
& 0.7011 & \textbf{0.7098}
& 0.8365 & \textbf{0.8442}
& 0.5642 & \textbf{0.6209}
& \textbf{0.9416} & 0.9380
& 0.8124 & \textbf{0.8375} \\

MemAE        
& 0.6751 & \textbf{0.6963}
& 0.5484 & \textbf{0.5681}
& 0.7353 & \textbf{0.7410}
& 0.7789 & \textbf{0.7995}
& 0.3416 & \textbf{0.3807}
& 0.8296 & \textbf{0.8435}
& 0.7371 & \textbf{0.8217} \\

AE           
& 0.6421 & \textbf{0.6629}
& \textbf{0.5603} & 0.5061
& \textbf{0.7500} & 0.7494
& 0.8103 & \textbf{0.8308}
& 0.3645 & \textbf{0.3682}
& 0.8614 & \textbf{0.8655}
& \textbf{0.8248} & 0.8248 \\

AE-L1        
& 0.6648 & \textbf{0.6837}
& 0.5614 & \textbf{0.5623}
& 0.7481 & \textbf{0.7534}
& 0.7513 & \textbf{0.7728}
& 0.3403 & \textbf{0.3433}
& 0.8184 & \textbf{0.8330}
& 0.8088 & \textbf{0.8221} \\

AE-SSIM      
& \textbf{0.6505} & 0.6472
& 0.4574 & \textbf{0.4577}
& 0.7687 & \textbf{0.7704}
& \textbf{0.6802} & \textbf{0.6802}
& 0.3948 & \textbf{0.3949}
& 0.9217 & \textbf{0.9355}
& \textbf{0.8010} & 0.7985 \\

AE-Perceptual
& 0.8645 & \textbf{0.8669}
& \textbf{0.7090} & 0.7027
& \textbf{0.6904} & 0.6877
& 0.8549 & \textbf{0.8562}
& \textbf{0.7586} & 0.7586
& \textbf{0.9558} & 0.9558
& 0.8568 & \textbf{0.8572} \\
\bottomrule
\end{tabular}
}
\end{table*}

\subsection{Self-supervised \& Reconstruction Based Method}
\label{subsec:self_supervised_results}
The $AUCp$ method demonstrates significant advancements in the detection of various medical conditions by enhancing the performance of self-supervised, image-reconstruction, and feature reconstruction based abnormality detection methods. These improvements are observed across multiple datasets which is presented in Tab. \ref{tab:results_all_grouped_vertical} and \ref{tab:aucp_reconstruction_comparison}, leading to more reliable and accurate disease detection, which is critical for clinical applications.
\subsubsection{Chest/Pulmonary Abnormality Detection}
Chest and pul- monary abnormality detection, such as identifying lung opaci- ties, is crucial for diagnosing diseases like pneumonia and other pulmonary pathologies. The RSNA dataset, which contains X- rays showing lung opacities, has seen significant improvements when the $\boldsymbol{AUC}_p$ method was applied. For example, CutPaste’s AUC increases from 61.17\% to 73.20\%, showing a notable improvement in its ability to detect pulmonary anomalies. Similarly, FPI’s AUC rise from 49.37\% to 70.67\%, while NSA has seen an increase from 80.54\% to 86.73\%. The VIN dataset, which also focuses on chest-related anomalies like viral infections, demonstrate similar improvements. Here, CutPaste’s AUC increase from 54.59\% to 67.22\%, FPI’s from 47.46\% to 69.14\%, and NSA’s performance jumps from 49.99\% to 76.22\%. Turning to image-reconstruction baselines, RSNA shows clear improvements for AEU (78.35\% to 87.63\%), MemAE (65.72\% to 69.63\%), AE-L1 (66.89\% to 68.37\%), and AE-SSIM (64.26\% to 64.72\%), with AE-Perceptual nudging up (86.49\% to 86.69\%) and vanilla AE slightly down (67.02\% to 66.29\%). On VIN, AEU (68.33\% to 73.34\%), AE-Perceptual (69.88\% to 70.27\%), MemAE (54.65\% to 56.81\%), AE (46.96\% to 50.61\%), AE-L1 (55.96\% to 56.23\%), and AE-SSIM (44.77\% to 45.77\%) all benefit to varying degrees. Feature-reconstruction methods are largely insensitive to the selection rule: FAE-SSIM stays at 90.53\% (the best absolute score on RSNA), and FAE-MSE remains at 74.23\%; on VIN, FAE-SSIM moves marginally (66.71\% to 67.04\%) while FAE-MSE is unchanged (53.09\%). Overall, $\boldsymbol{AUC}_p$ improves most self-supervised and image-reconstruction models, yielding the strongest VIN gains for NSA/FPI-Poisson and the largest RSNA gain for AEU, while feature-space reconstruction remains near a ceiling on RSNA.

We also evaluate the effectiveness of $AUC_p$ as an inference criterion relative to standard reconstruction loss for chest and pulmonary abnormality detection. As summarized in Tab. \ref{tab:aucp_reconstruction_comparison}, $AUC_p$-based inference consistently matches or outperforms reconstruction loss across most image-reconstruction methods on both RSNA and VIN datasets. On RSNA, $AUC_p$ improves inference performance for AEU, MemAE, AE-L1, and AE-SSIM, while maintaining comparable performance for AE-Perceptual and exhibiting only minor degradation for vanilla AE. A similar pattern is observed on VIN, where $AUC_p$ yields higher AUC for AEU, MemAE, AE, AE-L1, and AE-SSIM, with AE-Perceptual remaining largely stable. These findings indicate that, even in chest X-ray data where reconstruction error can be effective, $AUC_p$ provides a more reliable and consistent inference signal than reconstruction loss, particularly for models sensitive to background anatomical variability.

\subsubsection{Brain Tumor Detection}
Brain tumor detection is an essential area in medical imaging, as early detection can significantly improve patient outcomes. The Brain dataset, which focuses on detecting brain tumors, demonstrates the significant impact of $AUC_p$ on model performance. For instance, FPI’s $AUC$ rises from 29.32\% to 88.81\%, a clear indication of the method’s ability to select an inference model for better tumor detection accuracy. Similarly, NSA improves from 78.53\% to 91.86\%, marking a strong boost in its ability to identify brain abnormalities. Additional self-supervised methods also benefit: CutPaste increases from 22.08\% to 82.77\%, FPI-Poisson from 24.25\% to 78.09\%, CutPaste-3way from 76.42\% to 77.17\%, CutPaste-Normal from 91.61\% to 91.67\%, and Anapaste from 94.35\% to 94.36\%. Reconstruction baselines follow the same trend: among image-reconstruction methods, AEU improves from 90.42\% to 93.80\%, MemAE from 60.30\% to 84.35\%, AE from 85.15\% to 86.55\%, AE-L1 remains at 83.30\%, AE-SSIM increases from 92.17\% to 93.55\%, and AE-Perceptual remains at 95.58\% (the best overall on Brain). In feature-reconstruction, FAE-SSIM remains at 92.05\%, while FAE-MSE increases from 82.73\% to 83.01\%.

The Brats21 dataset, focused specifically on brain tumor detection, also benefits from $AUC_p$. CutPaste’s $AUC$ rises from 46.50\% to 60.13\%, while FPI-Poisson increases from 84.20\% to 87.36\%. The remaining self-supervised methods show consistent patterns: FPI rises from 51.42\% to 76.45\%, CutPaste-3way from 53.85\% to 74.19\%, CutPaste-Normal from 51.90\% to 55.76\%, Anapaste from 74.69\% to 74.97\%, and NSA decreases from 87.34\% to 81.23\%. Reconstruction methods also improve in most cases: AEU rises from 81.53\% to 83.75\%, MemAE from 80.95\% to 82.17\%, AE-L1 from 78.51\% to 82.21\%, and AE-Perceptual from 85.68\% to 85.72\%, while AE remains at 82.48\% and AE-SSIM changes from 80.10\% to 79.85\%. Feature-reconstruction methods show small changes, with FAE-SSIM remaining at 76.20\% and FAE-MSE increasing from 78.70\% to 79.18\%. Despite some smaller gains in Brats21, these improvements are still meaningful, particularly for detecting complex brain tumors where precision is critical for treatment planning.

We further analyze the role of $AUC_p$ as an inference criterion for image reconstruction-based methods by comparing it directly against standard reconstruction loss. As shown in Tab. \ref{tab:aucp_reconstruction_comparison}, $AUC_p$-based inference consistently yields equal or higher $AUC$ across most image-reconstruction models on both Brain and Brats21 datasets. On the Brain dataset, $AUC_p$ improves performance for AEU, MemAE, AE, AE-L1, and AE-SSIM, while maintaining the already strong performance of AE-Perceptual. Similar trends are observed on Brats21, where $AUC_p$ leads to notable gains for AEU, MemAE, AE-L1, and AE-Perceptual, with only marginal changes for AE and AE-SSIM. These results indicate that, even when the training epoch is fixed, reconstruction loss alone may not provide a reliable anomaly scoring signal for complex brain tumor data, whereas $AUC_p$ offers a more robust and consistent inference strategy.

\subsubsection{Retinal Fundus Detection}
Retinal fundus imaging is critical for detecting abnormalities in the retina, such as diabetic retinopathy, macular degeneration, and other eye conditions. The LAG dataset, which contains retinal fundus images, demonstrates significant improvements in disease detection when the $AUC_p$ method is applied. Specifically, CutPaste’s $AUC$ increases from 48.92\% to 68.90\%, indicating a substantial improvement in detecting retinal anomalies. FPI also shows improvement, with its $AUC$ rising from 59.95\% to 69.94\%. NSA, which starts at 61.80\%, reaches 75.40\%, reflecting a notable enhancement in the ability to detect subtle retinal issues. Additional self-supervised methods follow the same trend: FPI-Poisson improves from 39.34\% to 67.01\%, CutPaste-Normal from 60.52\% to 63.91\%, CutPaste-3way from 60.95\% to 61.91\%, and Anapaste from 73.04\% to 73.64\%. Reconstruction baselines also benefit: among image-reconstruction methods, AEU increases from 81.73\% to 84.42\%, MemAE from 75.83\% to 79.95\%, AE from 82.14\% to 83.08\%, AE-L1 from 75.13\% to 77.28\%, AE-SSIM remains at 68.02\%, and AE-Perceptual rises from 85.49\% to 85.62\%. Feature-reconstruction methods are unchanged on LAG, with FAE-SSIM at 83.42\% and FAE-MSE at 73.49\%. Overall, these results indicate that $AUC_p$ consistently selects stronger inference checkpoints for both self-supervised and reconstruction-based paradigms on retinal fundus imaging.

Tab \ref{tab:aucp_reconstruction_comparison} further shows that $AUC_p$ serves as a more reliable inference model selection method than reconstruction loss for retinal fundus imaging. On the LAG dataset, $AUC_p$ improves AUC for most image-reconstruction methods, including AEU, MemAE, AE, and AE-L1, while preserving the strong performance of AE-Perceptual and showing minimal change for AE-SSIM. These results indicate that, in relatively structured retinal imagery, $AUC_p$ provides consistent yet stable gains without introducing sensitivity to reconstruction artifacts.


\subsubsection{Lymph Node Metastasis Detection}
The Camelyon16 dataset, which focuses on images related to lymph node metastasis, is also benefited significantly from the $AUC_p$ method. In this dataset, CutPaste’s $AUC$ improves from 64.18\% to 70.58\%, showing a clear boost in the selected inference model’s ability to detect metastatic spread. FPI’s $AUC$ increases from 63.01\% to 70.46\%, and FPI-Poisson rises from 54.68\% to 71.31\%. Similarly, NSA’s $AUC$ improves from 60.83\% to 66.04\%. The remaining self-supervised variants also gain: CutPaste-Normal improves from 42.89\% to 57.92\%, CutPaste-3way from 42.79\% to 54.13\%, and Anapaste from 55.74\% to 62.14\%. Reconstruction-based baselines follow the same overall trend. Among image-reconstruction methods, AEU improves from 59.91\% to 62.09\%, MemAE from 36.65\% to 38.07\%, AE from 36.45\% to 36.82\%, AE-L1 from 34.04\% to 34.33\%, AE-SSIM remains at 39.49\%, and AE-Perceptual increases from 75.80\% to 75.86\%. In feature reconstruction, FAE-SSIM moves from 48.57\% to 49.58\%, while FAE-MSE remains at 44.73\%. Collectively, these results indicate that $AUC_p$ refining inference-time checkpoint selection for both self-supervised and reconstruction-based paradigms on Camelyon16, improving reliability for metastasis detection.

We further compare reconstruction loss and $AUC_p$ as inference criteria for lymph node metastasis detection on Camelyon16. As shown in Tab. \ref{tab:aucp_reconstruction_comparison}, $AUC_p$-based inference yields modest but consistent improvements over reconstruction loss for most image-reconstruction methods, including AEU, MemAE, AE, and AE-L1, while maintaining the strong performance of AE-Perceptual and showing negligible change for AE-SSIM. These results suggest that, in histopathology images with high structural variability, $AUC_p$ provides a more stable inference signal than reconstruction error alone, contributing to more reliable metastasis detection.

\subsubsection{Skin Lesion Detection}
Skin lesion detection, particularly for skin cancers such as melanoma, is an area where accurate abnormality detection is critical for improving patient outcomes. The ISIC dataset, which focuses on detecting skin lesions, shows notable improvements with the $AUC_p$ method. CutPaste’s $AUC$ improves from 54.42\% to 68.06\%, while FPI’s performance rises from 67.49\% to 75.24\%. FPI-Poisson’s $AUC$, though initially lower, increases from 46.55\% to 57.37\%, and NSA’s $AUC$ improves from 64.39\% to 71.14\%. Additional self-supervised variants show consistent behavior: CutPaste-Normal increases from 74.42\% to 77.04\%, CutPaste-3way from 72.31\% to 76.06\%, and Anapaste from 81.80\% to 81.95\%. Reconstruction baselines also benefit. Among image-reconstruction methods, AEU changes from 69.26\% to 70.98\%, MemAE from 70.14\% to 74.10\%, AE from 73.01\% to 74.94\%, AE-L1 from 74.82\% to 75.34\%, and AE-SSIM from 76.88\% to 77.04\%, while AE-Perceptual shows a slight decrease from 69.04\% to 68.77\%. In feature-reconstruction, FAE-SSIM remains at 78.75\% and FAE-MSE remains at 74.73\%. These results indicate that $AUC_p$ consistently selects stronger inference checkpoints for both self-supervised and reconstruction-based paradigms on ISIC, which is particularly important for early, accurate skin-cancer detection.

To further analyze inference behavior on skin lesion detection, we compare reconstruction loss with $AUC_p$ as the scoring criterion for reconstruction-based methods on ISIC. As summarized in Tab. \ref{tab:aucp_reconstruction_comparison}, $AUC_p$-based inference generally improves or preserves AUC across most image-reconstruction models, including AEU, MemAE, AE, AE-L1, and AE-SSIM, while the performance of AE-Perceptual remains comparable.

\subsection{Analysis of AUCp: Cost and Correlation with AUC}

A central goal of this work is to use $\boldsymbol{AUC}_p$ as a \emph{label–free} criterion for choosing the inference checkpoint across training epochs. This section represents practical aspects that determine whether $\boldsymbol{AUC}_p$ is viable in real deployments: its computational cost and its correlation with the $\boldsymbol{AUC}$.

\subsubsection{AUCp Runtime Cost Analysis}
The computational cost of calculating AUCp is essentially the same as that of a standard AUC computation. Given $n$ samples with anomaly scores ${s_i}$ and binary labels ${y_i \in {0,1}}$, the ROC curve is constructed by thresholding at each unique score. Since the ROC curve only changes at observed score values, the number of thresholds is at most the number of unique scores, i.e., $|T| \leq n$. The main steps are:
\begin{itemize}
\item Extracting unique thresholds: $O(n)$
\item Sorting scores in descending order: $O(n \log n)$
\item Computing cumulative sums and the ROC integral: $O(n)$
\end{itemize}
Thus, the overall time complexity is $O(n \log n)$ with space complexity $O(n)$, which is identical to conventional AUC computation. Importantly, AUCp introduces no additional asymptotic complexity beyond standard AUC, and in practice the sorting step remains negligible compared to the cost of model inference, even for large-scale or high-resolution datasets.

\subsubsection{Calculating AUC from AUCp}
Let $s(\cdot)$ denote the anomaly score function, where higher values indicate greater abnormality. 
The true area under the ROC curve (AUC) is defined as
\begin{equation}
\label{eq:true_auc}
\text{AUC} \;=\; \Pr\{ s(A) > s(N_{\text{te}}) \},
\end{equation}
where $A$ represents abnormal samples and $N_{\text{te}}$ denotes true normal test samples.  

In practice, the pseudo-AUC (AUCp) compares training normals $N_{\text{tr}}$ against the pseudo-positive test set, which is a mixture of abnormals and normals:
\begin{equation}
\label{eq:aucp_def_set}
\text{AUC}_{p} \;=\; \Pr\{ s(U) > s(N_{\text{tr}}) \}, \quad U \sim (1-\rho)A \cup \rho N_{\text{te}},
\end{equation}
where $\rho = \tfrac{|N_{\text{te}}|}{|N_{\text{te}}|+|A|}$ is the contamination fraction of true normals in the pseudo-positive set.  

Applying the law of total probability, AUCp decomposes as
\begin{equation}
\label{eq:aucp_mix}
\text{AUC}_{p} \;=\; (1-\rho)\,\Pr\{ s(A) > s(N_{\text{tr}}) \} \;+\; \rho\,\Pr\{ s(N_{\text{te}}) > s(N_{\text{tr}}) \}.
\end{equation}

If the training and test normals are identically distributed (i.e., no covariate shift), then
\begin{equation}
\Pr\{ s(N_{\text{te}}) > s(N_{\text{tr}}) \} = 0.5,
\end{equation}
and furthermore
\begin{equation}
\Pr\{ s(A) > s(N_{\text{tr}}) \} = \Pr\{ s(A) > s(N_{\text{te}}) \} = \text{AUC}.
\end{equation}

Therefore, under the no-shift assumption, we obtain a closed-form relationship between the pseudo-AUC and the true AUC:
\begin{equation}
\label{eq:aucp_to_auc}
\text{AUC}_{p} \;=\; (1-\rho)\,\text{AUC} \;+\; \rho \cdot 0.5.
\end{equation}

Rearranging gives an unbiased estimator of the true AUC from AUCp:
\begin{equation}
\label{eq:auc_from_aucp}
\text{AUC} \;\approx\; \frac{\text{AUC}_{p} - 0.5\rho}{1 - \rho}.
\end{equation}

This expression demonstrates that AUCp shrinks linearly toward 0.5 as the contamination fraction $\rho$ increases, but can be corrected if $\rho$ is estimated using mixture proportion estimation (MPE) methods developed in the positive--unlabeled learning literature, such as Kernel Mean Embedding MPE \cite{ramaswamy2016mixture}, AlphaMax \cite{jain2016estimating}, or TIcE \cite{bekker2018estimating}.

\subsubsection{Correlation between $\boldsymbol{AUC}_p$ and $\boldsymbol{AUC}$}
The correlation between $AUC_p$ and $AUC$ across different self-supervised abnormality detection methods (CutPaste, FPI, FPI Poisson, and NSA) is analyzed, as shown in the Tab. \ref{tab:pearson_correlation}. We have these methods as they showed the best improvement after applying the $AUC_p$ method, and for each dataset, the results demonstrate a strong positive correlation between $\boldsymbol{AUC}_p$ and $\boldsymbol{AUC}$ for most datasets, indicating that $\boldsymbol{AUC}_p$ is an effective criterion for selecting high-performing models. For example, the correlation for the RSNA dataset is particularly high, with values of 0.9788 for CutPaste, 0.9846 for FPI, and 0.9960 for NSA, suggesting that $\boldsymbol{AUC}_p$ consistently identifies models that achieve superior $\boldsymbol{AUC}$ performance. Similarly, strong correlations were observed in the VIN (0.9701 for CutPaste, 0.9838 for FPI) and ISIC (0.9243 for CutPaste, 0.9274 for FPI) datasets. However, for datasets such as C16 and Brats21, the correlation was somewhat lower, particularly for FPI Poisson (0.8695 for C16 and 0.8101 for Brats21). Overall, the findings reinforce that $\boldsymbol{AUC}_p$ is a reliable model-selection method, showing a strong alignment with $\boldsymbol{AUC}$ across a range of self-supervised methods and medical datasets.

\begin{table}
\centering
\caption{This table shows the correlation between $\boldsymbol{AUC}_p$ and $\boldsymbol{AUC}$ of the trained models for all the datasets. The best-performing methods are selected to calculate the correlation. It demonstrates that $\boldsymbol{AUC}_p$ has a high correlation with $\boldsymbol{AUC}$ across different methods.}
\label{tab:pearson_correlation}
\begin{tabular}{lrrrr}
\toprule
Dataset & Cut Paste & FPI & FPI Poisson & NSA \\
\midrule
   RSNA & 0.9788 & 0.9846 & 0.9465 & 0.9960 \\
    VIN & 0.9701 & 0.9838 & 0.9847 & 0.9799 \\
   ISIC & 0.9243 & 0.9274 & 0.7643 & 0.9500 \\
    LAG & 0.9480 & 0.9731 & 0.9451 & 0.8642 \\
    C16 & 0.6627 & 0.7931 & 0.8695 & 0.9190 \\
  Brain & 0.9840 & 0.9957 & 0.9887 & 0.9898 \\
  Brats21 & 0.6689 & 0.4067 & 0.8101 & 0.7882 \\
\bottomrule
\end{tabular}
\end{table}
\section{Discussion \& Limitations}

The proposed $\boldsymbol{AUC}_p$ metric does not guarantee the selection of the best possible model for inference on any dataset as it uses {\em imperfect} labels. While, it is mathematically showed that the proposed $\boldsymbol{AUC}_p$ can rank the inference models similarly to AUC when actual labels are available, this holds only when the training dataset $\boldsymbol{D}_{train}$ is large and representative enough of normal instances. The experimental results verify that it generally selects a better inference model than the popular FID metric.

We conducted empirical simulations to assess the performance of $\boldsymbol{AUC}_p$ varying both the size of the normal training set $\boldsymbol{D}_{train}$ and the prevalence of abnormal samples in the test set $\boldsymbol{D}_{test}$. The results show that when $\boldsymbol{D}_{train}$ is small or biased, $\boldsymbol{AUC}_p$ performance degrades due to inaccurate characterization of the normal distribution. Conversely, enlarging $\boldsymbol{D}_{train}$ improves stability, and as the training set grows, the $\boldsymbol{AUC}_p$ values asymptotically approach those obtained with the true ground truth. Furthermore, when the prevalence of abnormal samples in $\boldsymbol{D}_{test}$ approaches zero, $\boldsymbol{AUC}_p$ deteriorates sharply, converging to 0.5, which is equivalent to random guessing. This sensitivity is consistent with Proposition~\ref{prop_aucp:gt_vs_gtp}, where the reliability of $\boldsymbol{AUC}_p$ depends on $n \gg k$. Our empirical analysis confirms that both factors: the representativeness of $\boldsymbol{D}_{train}$ and the presence of sufficient abnormal samples in $\boldsymbol{D}_{\mathrm{test}}$ are critical for robust model selection. The detailed output of the empirical simulation is demonstrated in Fig. \ref{fig:aucp_simulation}.

\begin{figure*}[ht]
    \centering
    \includegraphics[width=0.7 \linewidth]{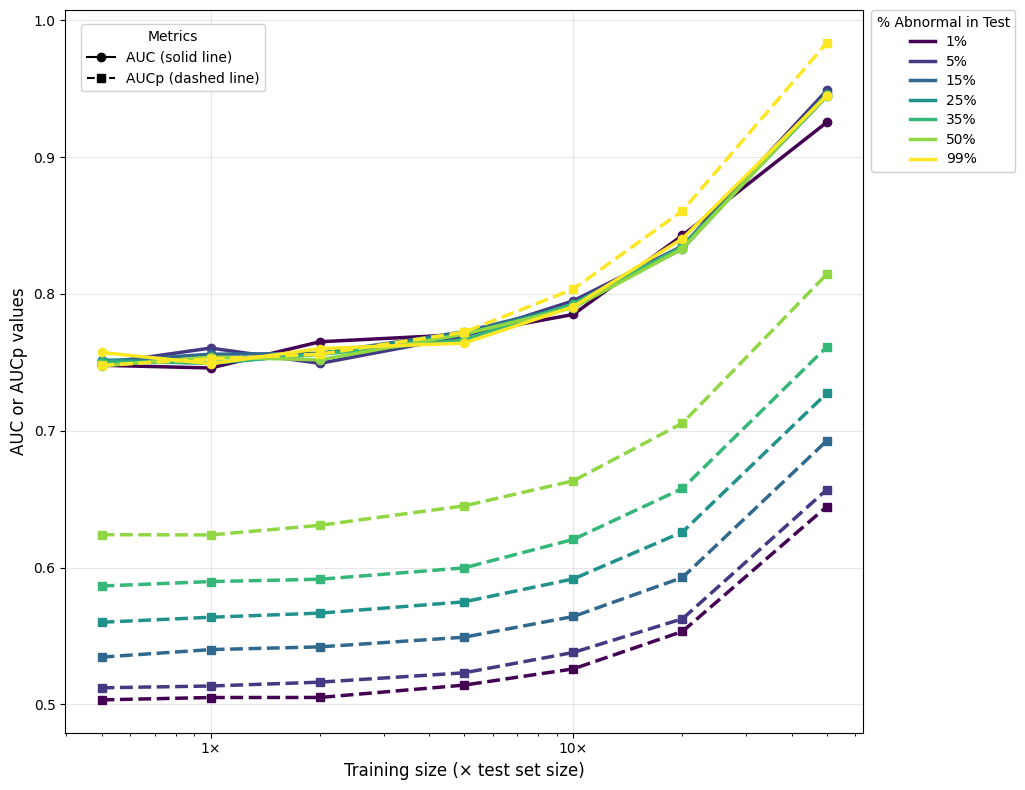}
    \caption{This figure reports empirical simulation results of $\boldsymbol{AUC}_p$ under different conditions. We vary the percentage of abnormal samples in the test set and the size of the normal-only training set and measure both $\boldsymbol{AUC}$ and $\boldsymbol{AUC}_p$. As predicted by the theory, when the test-set anomaly rate is very small, $\boldsymbol{AUC}_p$ approaches 0.5 (i.e., random guess) due to the pseudo-labeling assumption that marks all test samples as abnormal. Increasing the number of normal samples in the training set raises $\boldsymbol{AUC}_p$ in tandem with $\boldsymbol{AUC}$, reflecting improved separability under the pseudo-label regime.}
    \label{fig:aucp_simulation}
\end{figure*}

Though  $\boldsymbol{AUC}_p$ is proposed for abnormality detection from medical images, it can be utilized in abnormality detection tasks beyond medical images as long as (1) there is availability of a large and representative set of normal instances and (2) an unlabeled test set with at least one abnormal instance is present in it. If these conditions hold, $\boldsymbol{AUC}_p$ will satisfy Proposition 1 and Lemma 1 for any abnormality detection methods, irrespective of the domain. It is important to acknowledge that several other unsupervised model selection approaches have been explored in the literature, including methods based on model uncertainty, stability of learned representations, and pseudo-labeling strategies. Our formulation of $\boldsymbol{AUC}_{p}$ is complementary to these approaches: since it operates directly on inference outputs, it can, in principle, be combined with uncertainty estimation or stability analysis to further enhance model selection in scenarios without labeled validation data. This positions $\boldsymbol{AUC}{p}$ not as a replacement, but as a flexible and general criterion that can be integrated with existing model selection paradigms across domains.

Although $\boldsymbol{AUC}_p$ provides a principled and effective strategy for inference-time model selection in the absence of labeled validation data, several practical considerations should be noted. Computing $\boldsymbol{AUC}_p$ requires evaluating anomaly scores for all samples in the unlabeled evaluation set at each candidate checkpoint, analogous to standard $\boldsymbol{AUC}$-based evaluation protocols. Consequently, the computational cost scales approximately linearly with the number of checkpoints considered and the size of the evaluation set. Importantly, this cost is not unique to $\boldsymbol{AUC}_p$ but is shared by any approach that relies on $\boldsymbol{AUC}$ computation for model selection or validation. In practice, this overhead can be mitigated by checkpoint subsampling or early stopping strategies, and our empirical results demonstrate that $\boldsymbol{AUC}_p$ remains feasible and effective across diverse abnormality detection paradigms and dataset scales commonly encountered in medical imaging.
\section{Conclusion}

In the realm of abnormality detection, the absence of labeled abnormal data poses a significant challenge, extending beyond mere model training. The complexity is further compounded when selecting a well-trained and generalized model for inference, especially in the absence of labeled normal and abnormal samples in the validation dataset. While some studies rely on annotated validation datasets, others resort to synthetic abnormalities, both of which fall short of real-world applicability. Complicating matters, selecting the inference model solely based on the best training loss yields inconsistent results across different evaluation metrics. Our exploration into this challenge led us to propose a novel metric, $\boldsymbol{AUC}_p$. By assuming \textit{imperfect} labels for unlabeled instances in the test set as ``abnormal'' and leveraging known labels for normal instances in the training set, $\boldsymbol{AUC}_p$ offers a practical means of model selection in the absence of labeled validation data. Through mathematical analysis, we substantiate the efficacy of $\boldsymbol{AUC}_p$, demonstrating its proximity to actual AUC scores with representative training data. Through extensive empirical validation, we illustrate the superior correlation of $\boldsymbol{AUC}_p$ with the performance of underlying disease detection compared to traditional metrics such as FID and approaches such as selecting a model with the best training loss. Moreover, it significantly improves the detection capabilities of abnormality detection methods of both unsupervised and self-supervised methods, extending the boundaries of existing state-of-the-art methods. The proposed metric can improve both unsupervised and self-supervised disease detection performance and provide a way for more reliable and effective abnormality detection in medical imaging and beyond.

\bibliographystyle{unsrt}
\bibliography{reference}

\end{document}